\documentclass{article}

\PassOptionsToPackage{numbers, compress}{natbib}
\usepackage[preprint]{neurips_2026}

\usepackage[utf8]{inputenc}
\usepackage[T1]{fontenc}
\usepackage{hyperref}
\usepackage{url}
\usepackage{booktabs}
\usepackage{graphicx}
\graphicspath{{figures/}}
\usepackage{amsfonts}
\usepackage{amsmath}
\usepackage{amssymb}
\usepackage{amsthm}
\usepackage{nicefrac}
\usepackage{microtype}
\usepackage{xcolor}
\usepackage{enumitem}
\usepackage{subcaption}
\usepackage{wrapfig}
\usepackage{longtable}
\usepackage{multirow}

\newtheorem{lemma}{Lemma}
\newtheorem{proposition}{Proposition}

\theoremstyle{definition}

\newtheorem{remark}{Remark}

\title{LoMETab: Beyond Rank-1 Ensembles\newline for Tabular Deep Learning}

\author{%
  Changryeol Choi\thanks{Equal contribution.} \\
  CJ Logistics \\
  \texttt{changryeol.choi@cj.net} \\
  \And
  Hyewon Park\footnotemark[1] \\
  CJ Logistics \\
  \texttt{hyewon.park16@cj.net} \\
  \And
  Yujin Kwon \\
  CJ Logistics \\
  \texttt{yujin.kwon5@cj.net} \\
  \And
  Gowun Jeong\thanks{Corresponding author.} \\
  CJ Logistics \\
  \texttt{gowun.jeong@cj.net} \\
}

\begin{document}
\maketitle

\begin{abstract}
Recent tabular learning benchmarks increasingly show a tight performance cluster
rather than a clear hierarchy among leading methods, spanning gradient boosted decision trees, attention-based architectures, and implicit ensembles such as TabM.
As benchmark gains plateau, a complementary goal is to understand and control the mechanisms that make simple neural tabular models competitive.
We propose \textbf{LoMETab}, a rank-$r$ generalization of multiplicative implicit ensembles. LoMETab lifts the rank-1 BatchEnsemble/TabM modulation to a rank-$r$ identity-residual Hadamard family
by parameterizing each member weight as $W_k=W\odot(1+A_kB_k^\top)$, where $W$ is shared and $(A_k,B_k)$ are member-specific low-rank factors.
This exposes two practical diversity-control axes: the adapter rank $r$ and the initialization scale $\sigma_{\mathrm{init}}$,
and we prove that for $r \ge 2$ this generalization strictly enlarges BatchEnsemble's hypothesis class.
Empirically, we  show that this added capacity manifests as measurable predictive diversity after training: on representative classification datasets, 
LoMETab sustains higher pairwise KL than an additive low-rank ablation, and $(r,\sigma_{\mathrm{init}})$ provides broad control over pairwise KL, varying by up to several orders of magnitude across configurations. 
The induced diversity is reflected in task-appropriate output-level measures: argmax disagreement for classification and ambiguity for regression, indicating that the control extends beyond pairwise KL to decision- and output-level member variation.
Finally, experiments sweeping over adapter rank $r$ and initialization scale $\sigma_{\mathrm{init}}$ reveal that predictive performance is dataset-dependent over the $(r,\sigma_{\mathrm{init}})$ grid, supporting LoMETab as a controllable family of implicit ensembles rather than a fixed rank-1 construction.

\end{abstract}

\section{Introduction}
\label{sec:introduction}

Tabular data remains one of the most prevalent data modalities in real-world applications, including finance, healthcare, and logistics.
Although gradient boosted decision trees (GBDTs) remain exceptionally strong on such tasks~\citep{chen2016xgboost, prokhorenkova2018catboost}, deep tabular models offer a complementary advantage: their differentiable, modular backbones make it natural to incorporate learned feature embeddings, task-specific heads, auxiliary losses, and parameter-sharing mechanisms within a single end-to-end training objective.
Recent benchmarks show that simple MLP-based tabular models can be surprisingly competitive, often matching or outperforming more complex attention- and retrieval-based architectures~\citep{gorishniy2024modernnca, grinsztajn2022tree, mcelfresh2024neural}.
This makes strengthening simple neural tabular backbones a practical alternative to increasing architectural complexity.

Deep ensembles~\citep{lakshminarayanan2017simple} are a strong way to improve such models, but training and maintaining $K$ independent networks scales linearly with ensemble size.
Implicit ensembles reduce this cost by sharing most parameters among members while retaining member-specific perturbations.
BatchEnsemble~\citep{wen2020batchensemble} assigns two learnable vectors $r_k$ and $s_k$ to each member, which multiplicatively modulate a shared weight $W$ to form an effective member weight 
$W_k = W \odot (s_k r_k^\top)$, yielding ensemble-like behavior at near single-model parameter cost. TabM~\citep{gorishniy2025tabm} applies this idea to tabular MLPs, achieving performance competitive with strong tabular baselines.

However, TabM inherits the fixed rank-1 structure of BatchEnsemble:
each member modulates the shared weight only through a rank-1 mask, leaving no explicit axis for increasing the capacity of member-specific modulation.
Consequently, ensemble diversity---a key driver of ensemble gains---is not directly controllable; it must emerge implicitly from fixed rank-1 masks and training dynamics.
Prior work also reports that BatchEnsemble members can converge to near-identical functions in some settings~\citep{zamyatin2026batchensemble}, suggesting that the rank-1 modulation may not always realize meaningful diversity.

We propose \textbf{LoMETab}, a rank-$r$ generalization of multiplicative implicit ensembles.
Instead of the rank-1 BatchEnsemble mask, LoMETab assigns each member $k$ a low-rank adapter pair $(A_k,B_k)$ and defines its effective weight as $W_k=W\odot(1+A_kB_k^\top)$.
Here, $A_kB_k^\top$ defines a rank-at-most-$r$ residual, and $1 + A_kB_k^\top$ forms an identity-residual multiplicative perturbation of the shared weight.
This Hadamard identity-residual form follows the multiplicative reparameterization used in HiRA \cite{huang2025hira}, but we use it for a different purpose: member-specific diversity generation in scratch-trained tabular implicit ensembles rather than single-model fine-tuning.
The resulting framework exposes the adapter rank $r$ and initialization scale $\sigma_{\mathrm{init}}$ as practical axes for controlling ensemble diversity.


We first establish that the proposed rank axis genuinely expands the rank-1 design space:
for $r \ge 2$, every BatchEnsemble mask can be represented by LoMETab structure because $s_kr_k^\top-\mathbf{1}$ has rank at most two. 
Thus, LoMETab strictly contains BatchEnsemble at the layer-wise effective-weight level, while the converse does not hold.
Expressivity alone, however, does not guarantee meaningful ensemble diversity after training.
We therefore test whether $(r,\sigma_{\mathrm{init}})$ yields controllable differences in member predictions, and how these axes affect performance across datasets.



Our contributions are summarized as follows:
(1) We introduce LoMETab, a rank-$r$ multiplicative implicit-ensemble framework for tabular MLPs, where each member-specific deviation from a shared weight is parameterized by an identity-residual low-rank Hadamard perturbation.
(2) We show that the added rank capacity manifests as measurable predictive diversity after training: LoMETab sustains higher pairwise KL than an additive low-rank ablation,
and pairwise KL spans a wide range across the $(r,\sigma_{\mathrm{init}})$ grid.
The same variation is reflected in argmax disagreement for classification and ambiguity for regression.
(3) Across tabular benchmarks, LoMETab remains competitive with strong tabular baselines while exposing practical trade-offs among rank, ensemble size, and initialization. Tuned configurations frequently select $r>1$ (See App.~\ref{app:n_parameters}),
controlled $r$- and $\sigma_{\mathrm{init}}$-sweeps against TabM $K$-scaling show dataset-dependent effects on performance.

\vspace{-1em}

\section{Related Work}
\label{sec:related}


\textbf{Implicit ensembles and their limitations.}
Implicit ensembles train a single network that internally behaves like a collection of submodels, by sharing most parameters while introducing member-specific perturbations.
BatchEnsemble~\citep{wen2020batchensemble} uses rank-1 multiplicative masks, and TabM~\citep{gorishniy2025tabm} adapts this idea to tabular MLPs.
Other implicit-ensemble formulations diversify members through structural or activation-level variations: Masksembles~\citep{durasov2021masksembles} uses member-specific binary subnetworks, while
FiLMEnsemble~\citep{turkoglu2022filmensemble} applies member-specific feature-wise affine modulation to intermediate activations.
LoMETab differs by replacing the fixed rank-1 multiplicative mask with an identity-residual low-rank multiplicative mask, making the adapter rank a tunable design axis. 


\textbf{Low-rank adaptation and multiplicative reparameterization.}
LoRA~\citep{hu2022lora} introduced additive low-rank updates for parameter-efficient fine-tuning as follows: $W' = W_0 + AB^\top$.
LoRA-Ensemble~\citep{halbheer2024lora} uses low-rank adapters for efficient ensembles on pretrained Vision Transformers.
Multiplicative low-rank reparameterizations have also been explored for single-model fine-tuning:
HiRA~\citep{huang2025hira} uses a Hadamard-product form $W_0 \odot (\mathbf{1} + L_1 L_2)$ to enable higher-rank updates.
Our work adapts this multiplicative identity-residual structure from single-model fine-tuning to scratch-trained tabular implicit ensembles, where member-specific low-rank residuals are used to generate ensemble diversity.

\section{Proposed Method}
\label{sec:method}

In this section, we present LoMETab. Sec.~\ref{sec:method:prelim} fixes notation and reviews the preliminaries of our work. Sec.~\ref{sec:method:lometab} defines the LoMETab layer and establishes its expressivity guarantee.

Sec.~\ref{sec:method:objective} introduces the training objective of LoMETab.

\subsection{Preliminaries}
\label{sec:method:prelim}

\paragraph{Notation.}
We consider an implicit ensemble with $K$ members indexed by $k \in \{1,\ldots,K\}$, built on an $L$-block MLP backbone indexed by $l \in \{1,\ldots,L\}$.
Let $h_k^{(l-1)}$ denote the input representation to block $l$ for member $k$, with $h_k^{(0)}$ being the input $x$ after a piecewise linear embedding layer~\citep{gorishniy2022embeddings}.
At block $l$, all members share a base weight $W^{(l)} \in \mathbb{R}^{d_{\mathrm{out}}^{(l)} \times d_{\mathrm{in}}^{(l)}}$, and member $k$ uses its own adapter parameters $A_k^{(l)}, B_k^{(l)}$ to form an effective weight
$W_k^{(l)}$.
The linear output of member $k$ at block $l$ is $z_k^{(l)}$. 
The block output $h_k^{(l)}$ is obtained after the nonlinearity and dropout.
After the final block, a member-specific prediction head is applied to produce $o_1,\ldots,o_K$
and the final ensemble prediction $\hat{y}$ is obtained by averaging the $K$ member predictions.
When discussing a single generic layer, we omit the block index $l$.
We also omit bias terms for brevity; the same construction applies with or without them.


\textbf{Multiplicative rank-1 implicit ensembles.} BatchEnsemble~\citep{wen2020batchensemble} and TabM~\citep{gorishniy2025tabm} assign each submodel two learnable adapter vectors $r_k \in \mathbb{R}^{d_{\text{in}}}$ and $s_k \in \mathbb{R}^{d_{\text{out}}}$, defining the submodel's weight as $W_k = W \odot (s_k r_k^\top)$,
where $\odot$ denotes the Hadamard (element-wise) product and $s_k r_k^\top \in \mathbb{R}^{d_{\text{out}} \times d_{\text{in}}}$ is a rank-1 multiplicative mask.

\subsection{LoMETab}
\label{sec:method:lometab}

\begin{wrapfigure}{r}{0.55\linewidth}
\vspace{-2em}
\centering
\includegraphics[width=\linewidth]{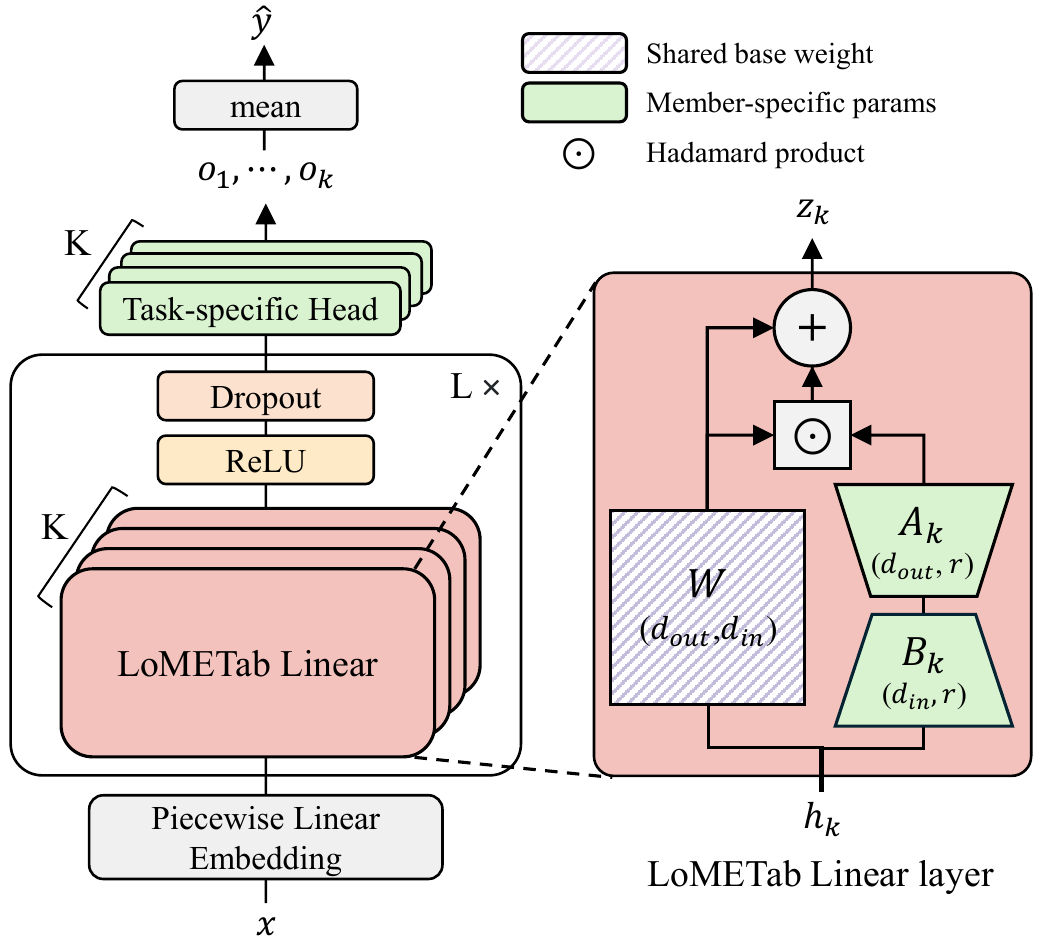}
\vspace{-1em}
\caption{\textbf{LoMETab Architecture.} The block index $l$ is omitted for readability.}
\label{fig:architecture}
\vspace{-1em}
\end{wrapfigure}

\paragraph{Formulation.}
For each member $k$, we assign member-specific low-rank adapter matrices
$A_k \in \mathbb{R}^{d_{\mathrm{out}} \times r}$ and
$B_k \in \mathbb{R}^{d_{\mathrm{in}} \times r}$, so that
$A_kB_k^\top$ forms a low-rank perturbation matrix. The effective weight of
member $k$ is defined as
\begin{equation}
    \label{eq:multiplicative_lora_method}
    W_k = W \odot (\mathbf{1} + A_kB_k^\top).
\end{equation}
Here, $A_kB_k^\top$ is a residual with rank at most $r$, and $1+A_kB_k^\top$ forms an identity-residual multiplicative perturbation of the shared weight. When $A_kB_k^\top\approx 0$, member $k$ remains close to the shared weight $W$; larger residuals induce member-specific deviations.

\paragraph{Architecture.}
Fig.~\ref{fig:architecture} illustrates how LoMETab is inserted into a tabular MLP.
A shared piecewise linear embedding~\citep{gorishniy2022embeddings} is applied once to the input $x$ and the resulting representation is copied across $K$ members.
Each hidden linear layer is then replaced by a \emph{LoMETab Linear layer}: a shared base weight $W$ is combined with member-specific adapters $(A_k,B_k)$ to produce $K$ member-specific effective weights (Eq.~\eqref{eq:multiplicative_lora_method}). 
The output of each member is passed through ReLU and dropout before the next block. After the final block, member-specific prediction heads produce $o_1,\ldots,o_K$, and the ensemble output is obtained by averaging member predictions.


\paragraph{Initialization scale.}

We initialize both $A_k$ and $B_k$ from $\mathcal{N}(0,\sigma_{\mathrm{init}}^2)$.
Small $\sigma_{\mathrm{init}}$ keeps members close to the shared weight at initialization,
whereas larger values induce stronger member-specific perturbations.
Since the effective magnitude of $A_kB_k^\top$ depends on both $r$ and $\sigma_{\mathrm{init}}$ under this factorization, we study them jointly as diversity-control axes in Sec.~\ref{sec:exp:controllability}.

\paragraph{Expressivity hierarchy.}
LoMETab is not identical to BatchEnsemble at $r=1$ because of the identity-residual parameterization. However, for $r \ge 2$, any BatchEnsemble mask $s_kr_k^\top$ can be represented as $1 + A_kB_k\top$, since $s_kr_k^\top - 1$ has rank at most 2.
Thus, the rank axis strictly enlarges the layer-wise family of representable effective weights.


\begin{proposition}[Layer-wise Expressivity Hierarchy]
\label{prop:expressivity}
Fix a layer with $d_{\text{out}}, d_{\text{in}} \ge 2$ and an ensemble size $K \ge 2$. Define the following hypothesis classes of effective weights $(W_1, \ldots, W_K)$:
    \begin{align*}
        \mathcal{H}_{\text{BE}} &= \big\{(W_k)_{k=1}^K : W_k = W \odot (s_k r_k^\top) \text{ for some } W, s_k, r_k \big\}, \\
        \mathcal{H}_{\text{LoMETab}}^{(r)} &= \big\{(W_k)_{k=1}^K : W_k = W \odot (\mathbf{1} + A_k B_k^\top) \text{ for some } W, A_k, B_k \big\}.
    \end{align*}
For rank $r \ge 2$, we have
$$\mathcal{H}_{\text{BE}} \subsetneq \mathcal{H}_{\text{LoMETab}}^{(r)}.$$
\end{proposition}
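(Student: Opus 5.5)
The proof has two parts: the inclusion $\mathcal{H}_{\text{BE}} \subseteq \mathcal{H}_{\text{LoMETab}}^{(r)}$, and an explicit tuple in $\mathcal{H}_{\text{LoMETab}}^{(r)}$ that lies outside $\mathcal{H}_{\text{BE}}$.

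\textbf{Inclusion.} Take $(W_k)_k\in\mathcal{H}_{\text{BE}}$ with $W_k = W\odot(s_kr_k^\top)$. I keep the same shared weight $W$ and write
\[
s_kr_k^\top = \mathbf{1} + \bigl(s_kr_k^\top - \mathbf{1}\mathbf{1}^\top\bigr) = \mathbf{1} + [\,s_k,\ -\mathbf{1}\,]\,[\,r_k,\ \mathbf{1}\,]^\top .
\]
So I set $A_k=[s_k,-\mathbf{1},0,\dots,0]\in\mathbb{R}^{d_{\text{out}}\times r}$ and $B_k=[r_k,\mathbf{1},0,\dots,0]\in\mathbb{R}^{d_{\text{in}}\times r}$, padding with zero columns, which uses $r\ge 2$. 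This gives $W_k = W\odot(\mathbf{1}+A_kB_k^\top)$ with a single common $W$, so the whole tuple lies in $\mathcal{H}_{\text{LoMETab}}^{(r)}$.

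\textbf{Strictness.} I need a tuple in $\mathcal{H}_{\text{LoMETab}}^{(r)}$ that no choice of $W,s_k,r_k$ reproduces. The key fact I use is this: if $W_k=W\odot(s_kr_k^\top)$ and $W$ has no zero entries on a $2\times 2$ submatrix, then on that submatrix the ratio $(W_k)_{ij}/W_{ij}$ is rank one. Equivalently, $W_k$ itself is a rank-1 Hadamard rescaling of a single matrix. The cleanest obstruction is zero patterns. In BatchEnsemble, $(W_k)_{ij}=0$ forces $W_{ij}=0$, $s_{k,i}=0$, or $r_{k,j}=0$.

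I choose $K\ge 2$ and $W=\mathbf{1}$ (all ones). I take $W_1$ with mask $\mathbf{1}+A_1B_1^\top$ equal to $\mathbf{1}-E_{11}$, where $E_{11}$ is the matrix unit, so it has a single zero at entry $(1,1)$; this residual is $-e_1e_1^\top$, of rank 1. All other members get $A_k=0$, so $W_k=\mathbf{1}$ for $k\ge 2$.

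Now suppose a BatchEnsemble representation $W',s_k,r_k$ exists. Since $W_2=\mathbf{1}$ has all entries nonzero, $W'_{ij}\neq 0$ for every $i,j$. Then $(W_1)_{11}=0$ forces $s_{1,1}=0$ or $r_{1,1}=0$. In the first case row 1 of $W_1$ vanishes, but $(W_1)_{12}=1$ (this needs $d_{\text{in}}\ge 2$). In the second case column 1 vanishes, but $(W_1)_{21}=1$ (this needs $d_{\text{out}}\ge 2$). Either way we have a contradiction. Note that this uses $K\ge 2$ and $d_{\text{out}},d_{\text{in}}\ge 2$, exactly the hypotheses of the proposition.

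\textbf{Main obstacle.} The delicate point is that $W$ is shared but otherwise free. A single member alone would not suffice, because $W_1$ could be absorbed into $W'$ itself. The second, all-ones member is what pins $W'$ to be entrywise nonzero, and the zero-pattern argument then closes the case analysis. I expect writing this step carefully, rather than the rank-2 factorization, to be the main work.
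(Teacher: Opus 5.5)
Your proof is correct. The inclusion half matches the paper's argument: the paper also keeps $W$ and factors the rank-$\le 2$ residual $s_kr_k^\top-\mathbf{1}$ with zero-padded factors. The only difference is that it gets the factors from a compact SVD, whereas your explicit choice $A_k=[s_k,-\mathbf{1},0,\dots,0]$, $B_k=[r_k,\mathbf{1},0,\dots,0]$ is more direct.

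For strictness you take a genuinely different route. The ratio fact you mention and then set aside is exactly the paper's key lemma. For BatchEnsemble with zero-free $W,s_j,r_j$, one has $W_i\oslash W_j=(s_i\oslash s_j)(r_i\oslash r_j)^\top$, which has rank at most one. The paper's witness is $W_1=W\odot(\mathbf{1}+e_1f_1^\top+e_2f_2^\top)$, $W_2=W$, whose ratio contains the nonsingular block $\begin{pmatrix}2&1\\1&2\end{pmatrix}$. You instead use a zero-pattern obstruction: a lone zero in $W_1$ would force a whole zero row or column.

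Both arguments rest on the device you single out. A second member equal to an entrywise-nonzero shared weight forces any BatchEnsemble representation to have zero-free $W'$, $s_2$, $r_2$. You derive this explicitly. The paper instead relies on a blanket genericity assumption on $W$ and states its ratio lemma with all scalings nonzero; this is harmless, since only $s_2,r_2$ appear in denominators.

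As written, your witness also gives something extra. The residual $-e_1e_1^\top$ has rank one, so you in fact show $\mathcal{H}^{(1)}_{\text{LoMETab}}\not\subseteq\mathcal{H}_{\text{BE}}$, consistent with the paper's remark that $r=1$ already differs from BatchEnsemble. The ratio method could do the same with $\mathbf{1}+e_1e_1^\top$.

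The paper's route buys robustness instead. Your obstruction depends on an exact zero in the mask, a degenerate event. The paper's witness is zero-free, and a nonvanishing $2\times 2$ minor of $W_1\oslash W_2$ persists under small perturbations. So the paper shows the separation is not an artifact of zero entries, and it names the invariant LoMETab breaks: rank-one pairwise ratios between members.
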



The proof is provided in App.~\ref{app:proofs}.


\begin{remark}
Proposition~\ref{prop:expressivity} confirms that LoMETab strictly enlarges BatchEnsemble's hypothesis class for $r \ge 2$. Like other implicit ensembles, LoMETab does not aim to match deep ensemble expressivity; our focus is on \emph{controllability} along the rank and initialization axes.
\end{remark}

\subsection{Training Objective}
\label{sec:method:objective}

Each LoMETab member receives a direct learning signal through a member-wise loss, following TabM~\citep{gorishniy2025tabm}. Let $o_k(x)$ denote the raw output of member $k$ after the prediction head. We optimize
\vspace{-0.3em}
\begin{equation}
    \mathcal{L}(x,y)
    =
    \frac{1}{K}
    \sum_{k=1}^{K}
    \ell_{\mathrm{task}}(o_k(x), y),
\label{eqn:method:training_objective}
\end{equation}
\vspace{-0.5em}

where $\ell_{\mathrm{task}}$ is chosen according to the task type: mean squared error for regression, binary cross-entropy for binary classification, and cross-entropy for multiclass classification.

At inference time, the final prediction is obtained by averaging member predictions. For regression, we average the scalar member outputs. For classification, we first convert member outputs into predictive probabilities using sigmoid or softmax, and then average the resulting probabilities. This objective supervises each member-specific adapter directly, rather than only supervising the averaged ensemble output,
allowing the parallel submodels to specialize despite sharing the backbone.
\section{Experiments}
\label{sec:experiments}

\subsection{Experimental Setup}
\label{sec:exp:setup}

\paragraph{Datasets.}
For benchmark performance, we use 37 source datasets from the academic TabM benchmark: 9 \textsc{Default} datasets after excluding \texttt{Housing} (See App.~\ref{app:env_check}), and 28 \textsc{TabZilla} source datasets. The TabZilla sources are further expanded into 68 split-level evaluation tasks following the benchmark protocol; together with the 9 Default datasets, this yields 77 evaluation tasks for per-task tuning analyses. We report benchmark performance at the 37-source-dataset level and hyperparameter-selection statistics over the 77 evaluation tasks.

\textbf{Baselines.} We compare LoMETab against the set of methods evaluated by \citet{gorishniy2025tabm}, spanning gradient-boosted decision trees, attention- and retrieval-based deep tabular models, and MLP-based architectures including the recent implicit-ensemble baseline TabM~\citep{gorishniy2025tabm}. Variants augmented with piecewise-linear embeddings~\citep{gorishniy2022embeddings} are marked with $\dagger$, following the same convention as in TabM, while $\ddagger$ denotes the use of various periodic embeddings. To ensure a fair comparison under identical preprocessing and tuning protocols, all datasets and baseline results are sourced directly from~\citep{gorishniy2025tabm}; full hyperparameter ranges and Optuna budgets are reported in App.~\ref{app:hyperparameters}.


\textbf{Implementation Details.}
The shared weight $W$ uses Kaiming uniform initialization~\citep{he2015delving}.
Adapter initialization follows Sec.~\ref{sec:method:lometab}: both $A_k$ and $B_k$ are initialized from $\mathcal{N}(0,\sigma_{\mathrm{init}}^2)$.
Hyperparameters---
including ensemble size $K$, adapter rank $r$, initialization scale $\sigma_{\mathrm{init}}$, backbone depth $L$, hidden width $d$, learning rate, weight decay, and dropout
---are tuned independently per dataset with up to 100 Optuna trials, and each selected configuration is evaluated across 15 random seeds.
We use RMSE for regression, and Accuracy for classification.
The full search space is provided in App.~\ref{app:hyperparameters}.

\subsection{Main Results}
\label{sec:exp:main}


\begin{figure}[t]
\centering

    \begin{subfigure}{0.32\textwidth}
        \centering
        \includegraphics[width=\linewidth]{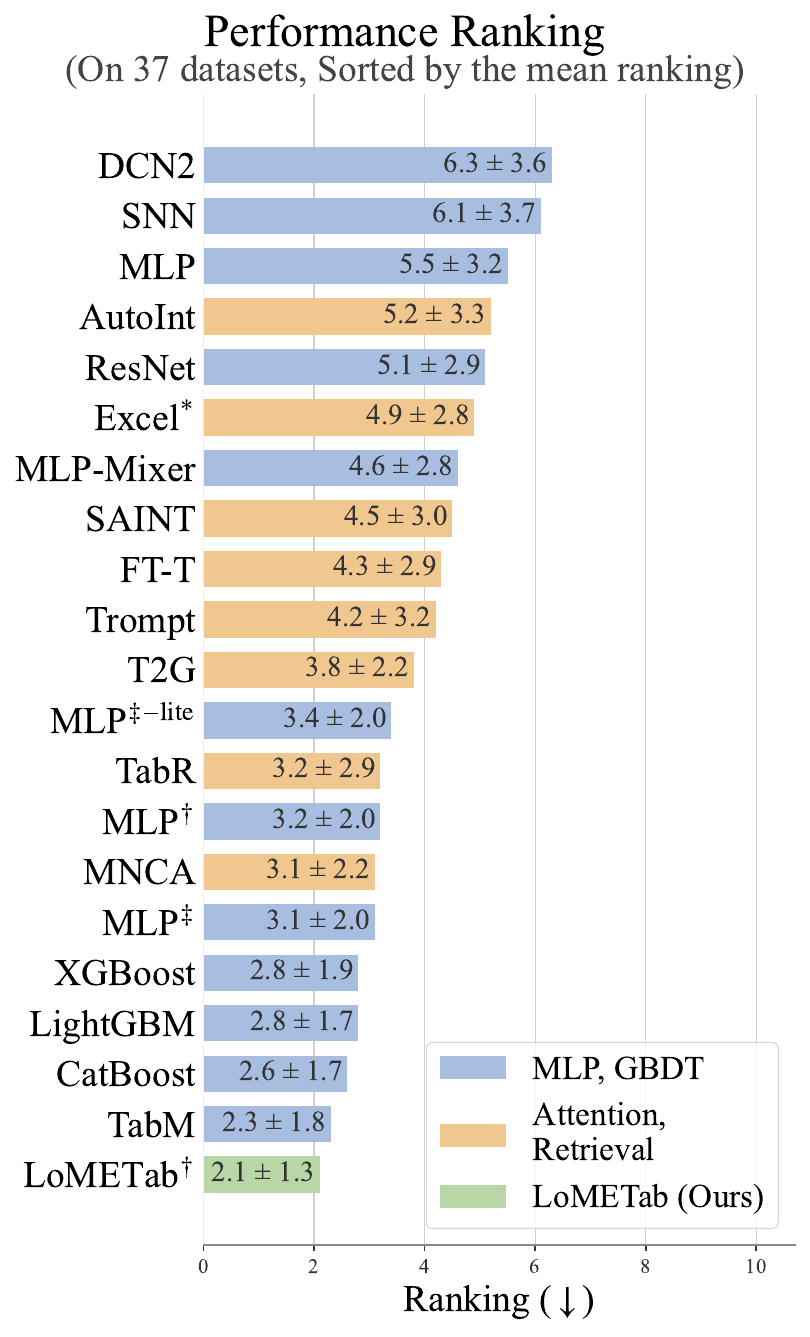}
        \caption{Performance Ranking}
        \label{fig:main_results:1-all}
    \end{subfigure}
    \hfill
    \begin{subfigure}{0.32\textwidth}
        \centering
        \includegraphics[width=\linewidth, trim=0cm 0cm 1cm 0cm]{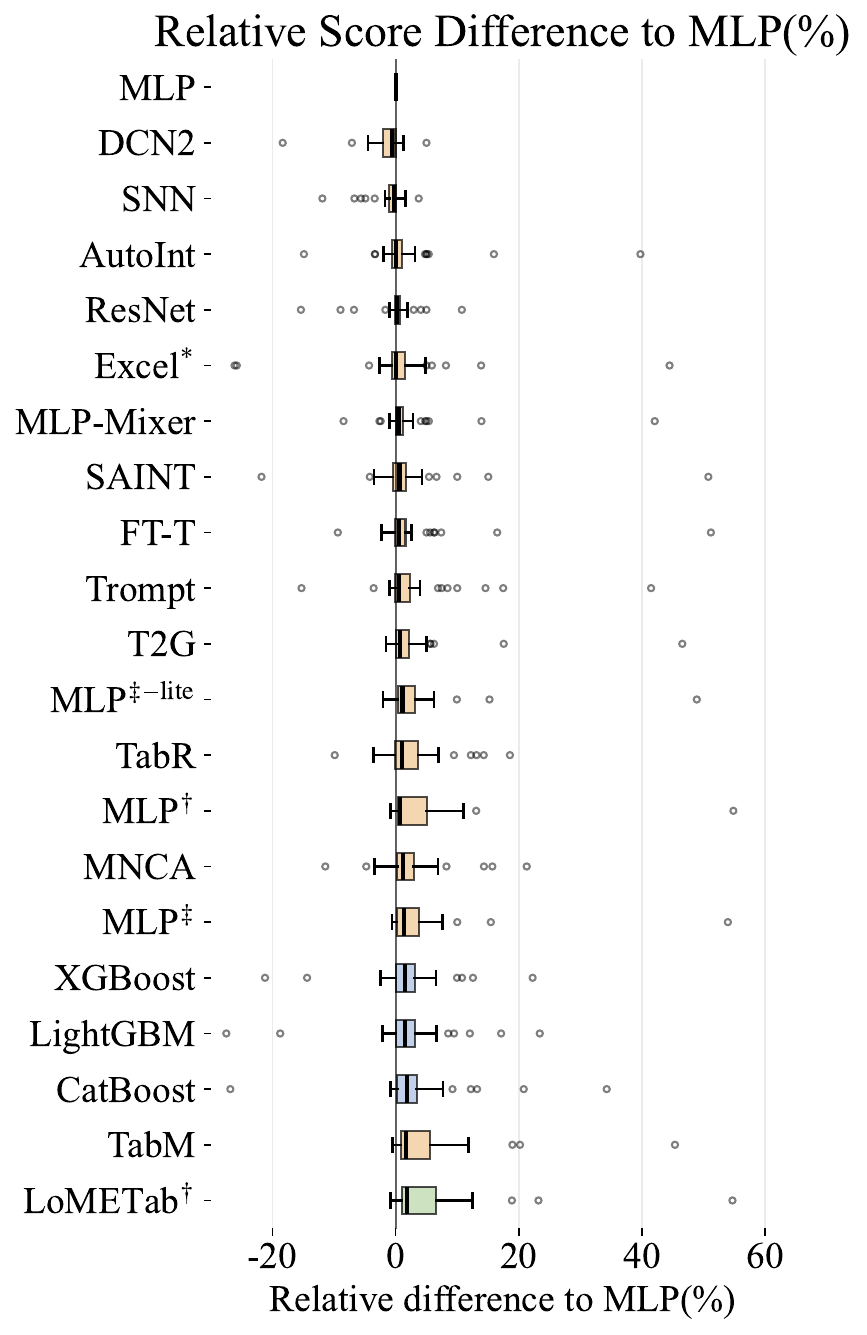}
        \caption{Relative Score to MLP (\%)}
        \label{fig:main_results:2-relative_mlp}
    \end{subfigure}
    \hfill
    \begin{subfigure}{0.32\textwidth}
        \centering
        \includegraphics[width=\linewidth]{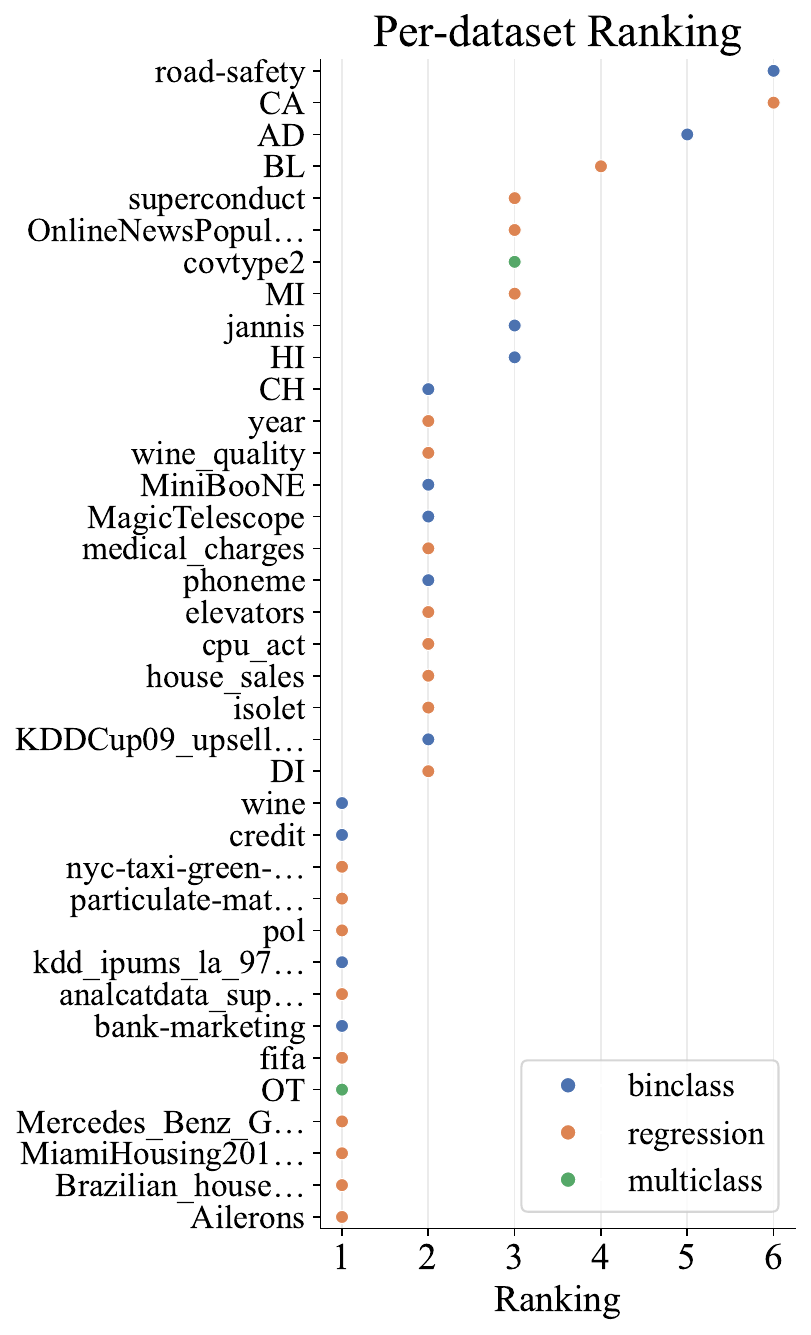}
        \caption{Distribution of Rankings} 
        \label{fig:main_results:3-dist_index}
    \end{subfigure}
    \caption{%
      \textbf{Benchmark performance across 37 academic datasets.}
      (a) Mean and standard deviation of per-dataset ranks; lower is better.
      (b) Sign-corrected relative score difference with respect to MLP (\%).
      (c) Per-dataset ranking of LoMETab.
    }
\label{fig:main_results}

\vspace{-2em}
\end{figure}

Fig.~\ref{fig:main_results} summarizes benchmark performance over the 37 source datasets.
LoMETab falls within the leading cluster of tabular methods, together with strong GBDT baselines, TabM, and recent neural tabular models.
Under the adopted ranking protocol, LoMETab obtains the lowest mean rank among the compared methods: $2.1\pm1.3$, compared with $2.3\pm1.8$ for TabM (Fig.~\ref{fig:main_results:1-all}).
LoMETab also ranks ahead of strong GBDT baselines in mean rank.

The remaining panels provide a dataset-level view of this behavior.
Fig.~\ref{fig:main_results:2-relative_mlp} shows that LoMETab generally improves over or remains close to the MLP reference under the sign-corrected relative metric, placing it within the top-performing group.
Fig.~\ref{fig:main_results:3-dist_index} shows that LoMETab achieves first place on 14 datasets and ranks within the top three on 89.2\% of datasets.
These results establish that LoMETab is competitive across heterogeneous tabular tasks.

We emphasize that these ranks are descriptive rather than evidence of statistical dominance.
As shown in Fig.~\ref{fig:main_results} and observed in recent tabular benchmarks~\citep{grinsztajn2022tree, mcelfresh2024neural}, leading methods form a tight performance band, with small differences depending on the dataset and tuning protocol.
Therefore, we use the benchmark primarily to establish that LoMETab is competitive, and focus the following analyses on the main question of this work: how the LoMETab design axes---rank $r$, initialization scale $\sigma_{\mathrm{init}}$, and ensemble size $K$---affect diversity and performance.
Full per-dataset benchmark scores with standard deviations are reported in App.~\ref{app:benchmark_full_results}.

\subsection{Analyses: Controlling Predictive Diversity}
\label{sec:exp:controllability}



Proposition~\ref{prop:expressivity} shows that LoMETab enlarges the rank-1 hypothesis class, but expressivity alone does not imply that the added capacity manifests as diversity among trained ensemble members.
We proceed in two steps: first, we compare the multiplicative low-rank parameterization underlying LoMETab with an additive low-rank ablation; second, we test whether $(r,\sigma_{\mathrm{init}})$ provide effective control over ensemble diversity.

\paragraph{Diversity metrics.}\label{sec:exp:div_metrics}
For classification datasets, we measure (i) probabilistic diversity via symmetric pairwise Kullback--Leibler divergence between members' predictive distributions~\citep{turkoglu2022filmensemble}, hereafter referred to as \emph{pairwise KL}, and (ii) decision-level diversity via pairwise argmax disagreement~\citep{nam2021diversity}---the fraction of test samples on which two members predict different classes. For regression datasets, where members produce scalar predictions rather than predictive distributions, we use the Krogh--Vedelsby ambiguity~\citep{krogh1994neural}, the average squared deviation of member predictions from the ensemble mean. We additionally report normalized ambiguity $\tilde{A} = A / \sigma_y^2$ to enable comparison across regression tasks with different target scales. Full definitions are given in App.~\ref{app:diversity_metrics}.

\subsubsection{Multiplicative parameterization sustains diversity.}
\label{sec:exp:additive_vs_mul}

\begin{figure}[t]
\centering
\includegraphics[width=\textwidth]{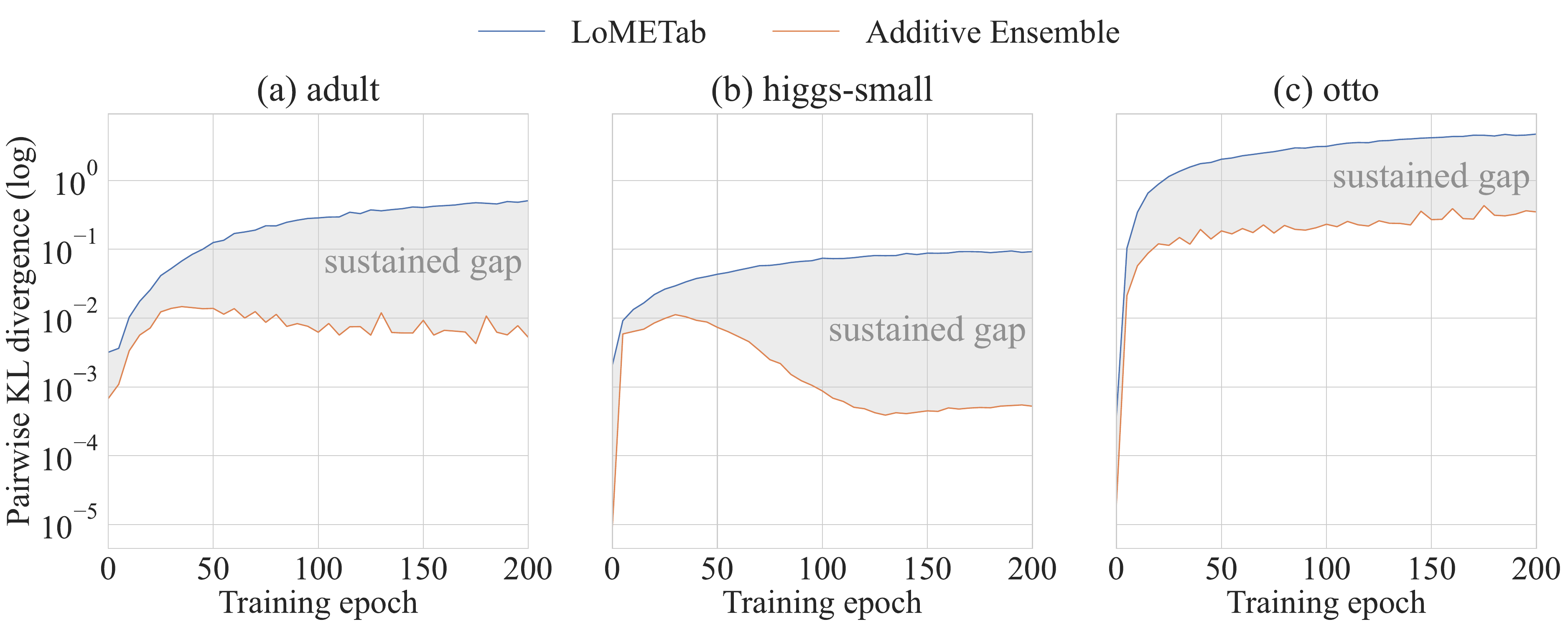}
\caption{%
  \textbf{Sustained diversity gap between additive ensemble and LoMETab.}
  Pairwise KL divergence between ensemble members (log scale) over 200 training epochs on three classification datasets:
  (a)~\texttt{adult}, (b)~\texttt{higgs-small}, and (c)~\texttt{otto}.
  LoMETab maintains substantially higher diversity than the additive ensemble throughout training (grey band).
}
\label{fig:additive_vs_multiplicative}
\end{figure}


The LoMETab layer in Eq.~\eqref{eq:multiplicative_lora_method} places the member-specific low-rank residual inside a multiplicative perturbation of the shared weight.
A natural ablation is an additive low-rank ensemble, $W_k = W + A_kB_k^\top$, which uses the same adapter rank and ensemble size but does not modulate the shared weight through a Hadamard product.
This comparison is not intended to exhaust all possible architectural alternatives; rather, it isolates whether the multiplicative placement of the same low-rank residual helps maintain member diversity during scratch training.

Under identical backbone, optimizer, $K$, and $r$, Fig.~\ref{fig:additive_vs_multiplicative} shows that the additive variant rapidly reaches a low-diversity plateau, whereas LoMETab maintains substantially higher pairwise KL throughout. We refer to this persistent separation as a \emph{sustained gap}.
Because both variants share the same hyperparameters, the gap cannot be attributed to more members or a larger low-rank adapter.
Rather, the result suggests that the placement of the low-rank factors matters for maintaining member diversity during scratch training.
This is consistent with the relative-perturbation form of Eq.~\eqref{eq:multiplicative_lora_method}: LoMETab applies the the low-rank residual as an identity-residual multiplicative factor, $W\odot(1+A_kB_k^\top)$, whereas the additive ablation applies an absolute update $A_kB_k^\top$.
We next study how the LoMETab axes $r$ and $\sigma_{\mathrm{init}}$ control diversity and performance.

\subsubsection{Rank and initialization jointly control predictive diversity.}
\label{sec:exp:controllability_grid}

\begin{figure}[t]
\centering
\includegraphics[width=\textwidth]{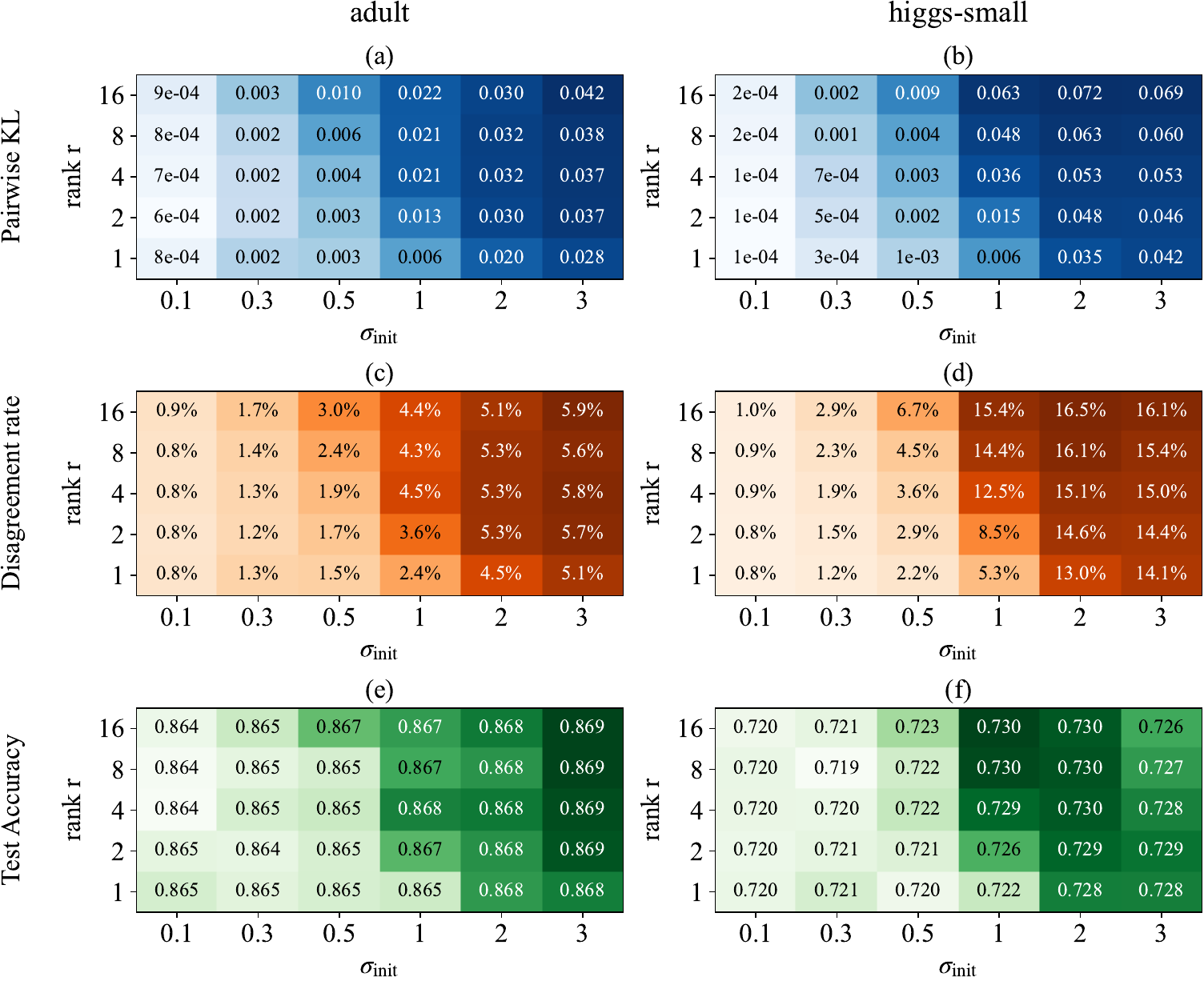}
\caption{
  Diversity control on the $(r,\sigma_{\mathrm{init}})$ grid (Classification).
  Heatmaps show pairwise KL divergence (a)--(b), pairwise argmax disagreement (c)--(d), and test accuracy (e)--(f) across rank $r$ and initialization scale $\sigma_{\mathrm{init}}$.
  Values are averaged over 15 random seeds; standard deviations are omitted for readability (See App.~\ref{app:diversity_grid}).
}
\label{fig:ctrl_grid}
\vspace{-1em}
\end{figure}

\paragraph{Grid study setup.}
We sweep $r \in \{1,2,4,8,16\}$ and $\sigma_{\mathrm{init}} \in \{0.1,0.3,0.5,1,2,3\}$, yielding 30
configurations per dataset. All non-swept hyperparameters are fixed to the Optuna-selected configuration for that dataset, and the ensemble size is fixed at $K=32$.
We focus on two classification datasets (\texttt{adult}, \texttt{higgs-small}) and two regression datasets (\texttt{california}, \texttt{black-friday}); full numerical grid results with seed-level standard deviations are analyzed in App.~\ref{app:diversity_grid}.
We interpret cells whose diversity metrics remain near the empirical floor of the grid as low-diversity regimes.

\paragraph{Classification diversity.}
Fig.~\ref{fig:ctrl_grid} shows that $r$ and $\sigma_{\mathrm{init}}$ jointly control probabilistic diversity (Fig.~\ref{fig:ctrl_grid} (a-b)).
At $\sigma_{\mathrm{init}}=0.1$, rank has little effect: pairwise KL remains near the low-diversity floor on both \texttt{adult} ($6\times10^{-4}$--$9\times10^{-4}$) and \texttt{higgs-small} ($1\times10^{-4}$--$2\times10^{-4}$).
As $\sigma_{\mathrm{init}}$ increases, the rank axis becomes active. At $\sigma_{\mathrm{init}}=1.0$, the rank sweep increases KL from about $0.006$ to $0.022$ on \texttt{adult}, and from about $0.006$ to $0.063$ on \texttt{higgs-small}.
Thus, member diversity depends on the joint choice of $r$ and $\sigma_{\mathrm{init}}$:
when $\sigma_{\mathrm{init}}$ is small,
the diversity metrics remain near their empirical floor regardless of rank,
whereas larger $\sigma_{\mathrm{init}}$ makes the rank axis effective.

This probabilistic diversity is also reflected in final decisions (Fig.~\ref{fig:ctrl_grid} (c)--(d)).
At the smallest initialization scale ($\sigma_{\mathrm{init}}=0.1$), pairwise argmax disagreement rate is around or below $1\%$ on both datasets.
At larger scales, disagreement reaches about $5$--$6\%$ on \texttt{adult} and about $14$--$17\%$ on \texttt{higgs-small}.
Hence, $(r,\sigma_{\mathrm{init}})$ controls not only probability-level variation but also decision-level member variation.

\begin{figure}[t]
\centering
\includegraphics[width=1.0\linewidth]{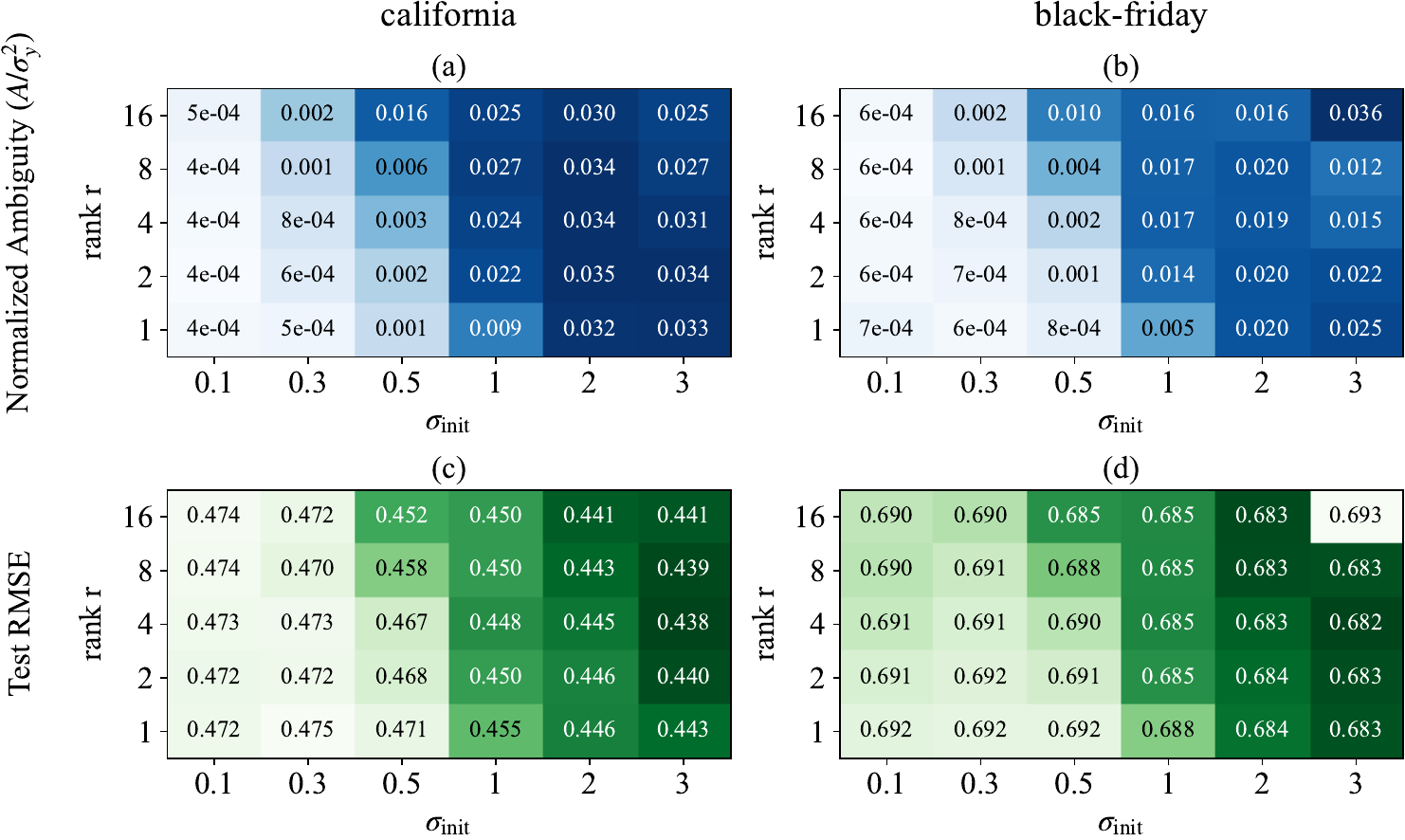}
\caption{
  Diversity control on the $(r,\sigma_{\mathrm{init}})$ grid (Regression).
  Heatmaps show normalized ambiguity $\tilde{A}=A/\sigma_y^2$ and test RMSE across rank $r$ and initialization scale $\sigma_{\mathrm{init}}$.
  Values are averaged over 15 random seeds; standard deviations are omitted for readability (See App.~\ref{app:diversity_grid}).
}
\label{fig:ctrl_grid_reg}
\end{figure}

\paragraph{Regression diversity.}
Fig.~\ref{fig:ctrl_grid_reg} shows that regression follows a similar rank--scale activation pattern (Fig.~\ref{fig:ctrl_grid_reg}(a, b)).
At $\sigma_{\mathrm{init}}=0.1$, normalized ambiguity remains near the metric floor for all ranks on both \texttt{california} and \texttt{black-friday}, so increasing rank alone has little effect.
As $\sigma_{\mathrm{init}}$ increases, the rank axis becomes active: at $\sigma_{\mathrm{init}}=0.5$, ambiguity increases from roughly $0.001$ to $0.016$ on \texttt{california} and from about $8\times10^{-4}$ to $0.010$ on \texttt{black-friday}.

Overall, regression ambiguity tends to increase with $\sigma_{\mathrm{init}}$, with mild saturation or non-monotonicity across rank at larger scales.
Thus, $(r,\sigma_{\mathrm{init}})$ also controls output-level member variation for regression,
but the largest-ambiguity cells do not necessarily coincide with the best RMSE.

\paragraph{Diversity--performance alignment.}
Across Fig.~\ref{fig:ctrl_grid} and Fig.~\ref{fig:ctrl_grid_reg}, predictive performance broadly improves as the grid moves away from the low-$\sigma_{\mathrm{init}}$ and low-diversity regime: classification accuracy increases in higher-diversity regions (Fig.~\ref{fig:ctrl_grid} (e)--(f)),
and regression RMSE generally decreases as $\sigma_{\mathrm{init}}$ grows (Fig.~\ref{fig:ctrl_grid_reg} (c)--(d)).
The trend is not strictly monotone in every cell, especially across rank and at extreme scales, but the results indicate that $(r,\sigma_{\mathrm{init}})$ are useful design handles for both diversity control and dataset-dependent performance tuning.

\subsection{Ablation Studies}
\label{sec:exp:design_axes}

Sec.~\ref{sec:exp:controllability_grid} shows axis $r$ and $\sigma_{\mathrm{init}}$ jointly activate
predictive diversity. We next ask whether these axes are useful for performance
and calibration tuning under a controlled protocol.
We conduct two single-axis sweeps on the Default benchmark datasets, comparing LoMETab against TabM $K$-scaling:
an $r$-sweep ($r \in \{1,2,4,8,16\}$) with $\sigma_{\mathrm{init}}=1.0$ fixed,
and a $\sigma_{\mathrm{init}}$-sweep ($\sigma_{\mathrm{init}} \in \{0.1,0.3,0.5,1,2\}$) with $r=16$ fixed.
All non-swept hyperparameters are shared within each comparison, and all results are reported as sign-corrected $\Delta\%$ relative to the TabM $K=32$ baseline. Calibration is measured via ECE (App.~\ref{app:diversity_metrics}). Full results for all sweep conditions are reported in App.~\ref{app:per-dataset-ablations}.

\begin{table*}[t]
\centering\small
\setlength{\tabcolsep}{4pt}
\caption{
    \textbf{Performance sweep ablation: TabM $K$-scaling vs.\ LoMETab single-axis sweeps.}
    Sign-corrected $\Delta\%$ over (a) TabM $K{=}32$ baseline, evaluated on selected datasets (15 seeds).
    (d) Best LoMETab$^{r}$: best over $r \in \{1,2,4,8,16\}$, fixed $\sigma_{\mathrm{init}}{=}1.0$, $K{=}16$.
    (e) Best LoMETab$^{\sigma}$: best over $\sigma_{\mathrm{init}} \in \{0.1,0.3,0.5,1.0,2.0\}$, fixed $r{=}16$, $K{=}16$.
    Best $(r, \sigma_{\mathrm{init}})$ pair shown below each $\Delta\%$.
    \textbf{Bold}: positive $\Delta\%$.
}
\label{tab:axis_impact}
\resizebox{\linewidth}{!}{%
\begin{tabular}{lccccccccc}
\toprule
 & \multicolumn{5}{c}{Classification (Accuracy $\uparrow$)} & \multicolumn{4}{c}{Regression (RMSE $\downarrow$)} \\
\cmidrule(lr){2-6} \cmidrule(lr){7-10}
 & \textbf{HI} & \textbf{OT} & \textbf{CO} & \textbf{AD} & \textbf{CH} & \textbf{BL} & \textbf{CA} & \textbf{DI} & \textbf{MI} \\
\midrule
(a) Baseline
 & 0.729 & 0.822 & 0.933 & 0.861 & 0.846 & 0.687 & 0.443 & 0.133 & 0.745 \\
\midrule
\multicolumn{10}{c}{\textit{Vanilla TabM, $K$ varied}} \\
\midrule
(b) TabM$^{K{=}16}$
 & $-0.01\%$ & $-0.01\%$ & $\mathbf{+0.40\%}$ & $-0.07\%$ & $-0.03\%$ & $-0.20\%$ & $-4.65\%$ & $-2.33\%$ & $-0.76\%$ \\
(c) TabM$^{K{=}64}$
 & $-0.06\%$ & $-0.28\%$ & $-7.37\%$ & $-0.86\%$ & $-1.32\%$ & $-0.14\%$ & $-4.38\%$ & $-2.13\%$ & $-0.83\%$ \\
\midrule
\multicolumn{10}{c}{\textit{LoMETab ($K{=}16$, $\sigma{=}1.0$) fixed, $r$ varied}} \\
\midrule
(d) Best LoMETab$^{r}$
 & $\mathbf{+0.09\%}$ & $\mathbf{+0.41\%}$ & $\mathbf{+2.64\%}$ & $-0.21\%$ & $\mathbf{+1.75\%}$ & $\mathbf{+0.49\%}$ & $-0.77\%$ & $-0.40\%$ & $-0.22\%$ \\
 & \scriptsize$r{=}16$ & \scriptsize$r{=}16$ & \scriptsize$r{=}1$ & \scriptsize$r{=}2$ & \scriptsize$r{=}2$ & \scriptsize$r{=}16$ & \scriptsize$r{=}16$ & \scriptsize$r{=}2$ & \scriptsize$r{=}16$ \\
\midrule
\multicolumn{10}{c}{\textit{LoMETab ($K{=}16$, $r{=}16$) fixed, $\sigma$ varied}} \\
\midrule
(e) Best LoMETab$^{\sigma}$
 & $\mathbf{+0.09\%}$ & $\mathbf{+0.41\%}$ & $\mathbf{+2.56\%}$ & $-0.28\%$ & $\mathbf{+1.78\%}$ & $\mathbf{+0.52\%}$ & $\mathbf{+0.15\%}$ & $-0.30\%$ & $-0.22\%$ \\
 & \scriptsize$\sigma{=}1.0$ & \scriptsize$\sigma{=}1.0$ & \scriptsize$\sigma{=}0.3$ & \scriptsize$\sigma{=}0.3$ & \scriptsize$\sigma{=}0.5$ & \scriptsize$\sigma{=}0.5$ & \scriptsize$\sigma{=}2.0$ & \scriptsize$\sigma{=}0.5$ & \scriptsize$\sigma{=}1.0$ \\
\bottomrule
\addlinespace[2pt]
\multicolumn{10}{l}{\scriptsize\textit{Datasets:} HI=Higgs-Small, OT=Otto, CO=Covtype2, AD=Adult, CH=Churn; BL=Black-Friday, CA=California, DI=Diamond, MI=Microsoft.} \\
\end{tabular}}
\vspace{-1em}
\end{table*}

\paragraph{Both $r$ and $\sigma_{\mathrm{init}}$ act as performance axes.}

Tab.~\ref{tab:axis_impact} contrasts four strategies against the (a) TabM $K{=}32$ baseline.
First, changing $K$ alone is not consistently beneficial under this protocol.
(b) Reducing TabM to $K=16$ yields broadly comparable results, whereas (c) increasing to $K=64$ degrades performance on most of the datasets, which is consistent with prior observations that excessive $K$ may be detrimental~\citep{gorishniy2025tabm}.
In contrast, (d)--(e) LoMETab sweeps at $K{=}16$ surpass or match the TabM $K=32$ baseline on a majority of datasets by tuning either $r$ or $\sigma_{\mathrm{init}}$ alone, suggesting that $r$ and $\sigma_{\mathrm{init}}$ provide useful performance
tuning axes beyond changing $K$ alone.
The strongest gains appear on CO and CH.
The selected values of $r$ and $\sigma_{\mathrm{init}}$ vary across datasets, supporting our view that they should be treated as tunable design axes rather than maximized by default.
Crucially, (d)--(e) often share the same gain via different selected values---for instance, CO improves by $+2.6\%$ through $r{=}1$ and equally through $\sigma_{\mathrm{init}}{=}0.3$---suggesting that the two axes provide complementary, dataset-dependent tuning effects.
Where they diverge---as in CA, where the $r$-sweep yields $-0.8\%$ while the $\sigma_{\mathrm{init}}$-sweep achieves $+0.1\%$
---the result suggests that $r$ and $\sigma_{\mathrm{init}}$ capture distinct effects, motivating joint tuning in practice.

\begin{table*}[t]
\centering\small
\setlength{\tabcolsep}{4pt}
\caption{
    \textbf{Accuracy--calibration ablation: TabM $K$-scaling vs.\ LoMETab best configuration.}
    Sign-corrected $\Delta\%$ on accuracy and ECE over (a) TabM $K{=}32$ baseline, evaluated on 5 classification datasets (15 seeds).
    LoMETab uses fixed $K{=}16$.
    (d) Best-ECE: single best $(r, \sigma_{\mathrm{init}})$ selected to minimize ECE per dataset.
    (e) Best-Acc: single best $(r, \sigma_{\mathrm{init}})$ selected to maximize accuracy per dataset.
    Search space: $r \in \{1,2,4,8,16\}$, $\sigma_{\mathrm{init}} \in \{0.1,0.3,0.5,1.0,2.0\}$.
    \textbf{Bold}: both $\Delta$Acc and $\Delta$ECE are positive.
    \fbox{$\cdot$}: Acc improves but ECE degrades.
}
\label{tab:pareto}
\resizebox{\linewidth}{!}{%
\begin{tabular}{lcccccccccc}
\toprule
 & \multicolumn{2}{c}{\textbf{HI}} & \multicolumn{2}{c}{\textbf{OT}} & \multicolumn{2}{c}{\textbf{CO}} & \multicolumn{2}{c}{\textbf{AD}} & \multicolumn{2}{c}{\textbf{CH}} \\
\cmidrule(lr){2-3} \cmidrule(lr){4-5} \cmidrule(lr){6-7} \cmidrule(lr){8-9} \cmidrule(lr){10-11}
 & $\Delta$Acc & $\Delta$ECE & $\Delta$Acc & $\Delta$ECE & $\Delta$Acc & $\Delta$ECE & $\Delta$Acc & $\Delta$ECE & $\Delta$Acc & $\Delta$ECE \\
\midrule
(a) Baseline
 & 0.729 & 0.0217 & 0.822 & 0.0434 & 0.933 & 0.0150 & 0.861 & 0.0240 & 0.846 & 0.0219 \\
\midrule
\multicolumn{11}{c}{\textit{Vanilla TabM, $K$ varied}} \\
\midrule
(b) TabM$^{K{=}64}$
 & $-0.1\%$ & $+41\%$ & $-0.3\%$ & $+30\%$ & $-7.4\%$ & $-8\%$ & $-0.9\%$ & $+55\%$ & $-1.3\%$ & $-282\%$ \\
(c) TabM$^{K{=}128}$
 & $-0.1\%$ & $+38\%$ & $-0.7\%$ & $+54\%$ & $-6.4\%$ & $-6\%$ & $-0.2\%$ & $+35\%$ & $-0.6\%$ & $-113\%$ \\
\midrule
\multicolumn{11}{c}{\textit{LoMETab ($K{=}16$), best over all $(r,\, \sigma_{\mathrm{init}})$ combinations}} \\
\midrule
(d) Best-ECE LoMETab
 & $-0.2\%$ & $+38\%$ & $\mathbf{+0.4\%}$ & $\mathbf{+78\%}$ & \fbox{$+2.6\%$} & $-15\%$ & $-0.3\%$ & $+60\%$ & $\mathbf{+1.5\%}$ & $\mathbf{+10\%}$ \\
 & \multicolumn{2}{c}{\scriptsize$(r{=}16,\,\sigma{=}2.0)$} & \multicolumn{2}{c}{\scriptsize$(r{=}16,\,\sigma{=}1.0)$} & \multicolumn{2}{c}{\scriptsize$(r{=}16,\,\sigma{=}0.3)$} & \multicolumn{2}{c}{\scriptsize$(r{=}16,\,\sigma{=}0.3)$} & \multicolumn{2}{c}{\scriptsize$(r{=}16,\,\sigma{=}0.3)$} \\
(e) Best-Acc LoMETab
 & $\mathbf{+0.1\%}$ & $\mathbf{+34\%}$ & $\mathbf{+0.4\%}$ & $\mathbf{+78\%}$ & \fbox{$+2.6\%$} & $-19\%$ & $-0.2\%$ & $+48\%$ & \fbox{$+1.8\%$} & $-12\%$ \\
 & \multicolumn{2}{c}{\scriptsize$(r{=}16,\,\sigma{=}1.0)$} & \multicolumn{2}{c}{\scriptsize$(r{=}16,\,\sigma{=}1.0)$} & \multicolumn{2}{c}{\scriptsize$(r{=}1,\,\sigma{=}1.0)$} & \multicolumn{2}{c}{\scriptsize$(r{=}2,\,\sigma{=}1.0)$} & \multicolumn{2}{c}{\scriptsize$(r{=}16,\,\sigma{=}0.5)$} \\
\bottomrule
\addlinespace[2pt]
\multicolumn{11}{l}{\scriptsize\textit{Datasets:} HI=Higgs-Small, OT=Otto, CO=Covtype2, AD=Adult, CH=Churn.} \\
\end{tabular}}
\end{table*}


\paragraph{Accuracy--Calibration trade-offs.}


Tab.~\ref{tab:pareto} shows that the same distinction matters for calibration.
(b)--(c) Increasing $K$ can improve ECE on some datasets, but often at the cost of accuracy; on CO and CH, both accuracy and ECE degrade.
(d)--(e) LoMETab provides additional ways to tune this trade-off through $r$ and $\sigma_{\mathrm{init}}$:
some configurations improve both accuracy and ECE on HI, OT, and CH, while CO improves substantially in accuracy (+2.6\%) but worsens in ECE. This indicates that calibration is not solved by
more diversity alone; the form and scale of the member deviations matter.
Together, these ablations suggest that useful diversity in implicit ensembles is not just a matter of averaging more members; it also depends on how members are allowed to deviate from the shared backbone: TabM changes the number of rank-1 members through $K$, whereas LoMETab changes the rank and scale of the member-specific perturbations through $(r,\sigma_{\mathrm{init}})$.

\vspace{-1em}

\section{Conclusion}
\label{sec:conclusion}


We introduced LoMETab, a rank-$r$ multiplicative implicit ensemble that replaces the fixed rank-1 BatchEnsemble/TabM mask with an identity-residual low-rank Hadamard perturbation.
This yields a strictly larger layer-wise family for $r\ge2$, while exposing $r$ and $\sigma_{\mathrm{init}}$ as practical axes for controlling member diversity.
Empirically, LoMETab remains competitive on tabular benchmarks, sustains higher pairwise KL than an additive low-rank ablation, and lets pairwise KL vary by more than two orders of magnitude, with corresponding effects on argmax disagreement, regression ambiguity, and dataset-dependent performance.
These results suggest that, for tabular implicit ensembles, how members are allowed to deviate from a shared backbone is a central design question.

\paragraph{Limitations and Future Work.}
Our analysis establishes that $(r, \sigma_{\text{init}})$ provide two-axis control over ensemble diversity, but leaves open the question of what downstream benefits this controllability enables. Whether the diversity exposed by our formulation translates into improved uncertainty estimation or out-of-distribution detection on tabular data---tasks of increasing interest in the tabular deep learning community---is the most natural direction for future work.

\bibliographystyle{plainnat}
\bibliography{references}

@inproceedings{wen2020batchensemble,
  title     = {{BatchEnsemble}: An Alternative Approach to Efficient Ensemble and Lifelong Learning},
  author    = {Wen, Yeming and Tran, Dustin and Ba, Jimmy},
  booktitle = {International Conference on Learning Representations (ICLR)},
  year      = {2020},
  url       = {https://openreview.net/forum?id=Sklf1yrYDr}
}

@inproceedings{durasov2021masksembles,
  title     = {Masksembles for Uncertainty Estimation},
  author    = {Durasov, Nikita and Bagautdinov, Timur and Baque, Pierre and Fua, Pascal},
  booktitle = {IEEE/CVF Conference on Computer Vision and Pattern Recognition (CVPR)},
  year      = {2021}
}

@inproceedings{turkoglu2022filmensemble,
  title     = {{FiLM-Ensemble}: Probabilistic Deep Learning via Feature-wise Linear Modulation},
  author    = {Turkoglu, Mehmet Ozgur and Becker, Alexander and G{\"u}nd{\"u}z, H{\"u}seyin Anil and Rezaei, Mina and Bischl, Bernd and Daudt, Rodrigo Caye and D'Aronco, Stefano and Wegner, Jan Dirk and Schindler, Konrad},
  booktitle = {Advances in Neural Information Processing Systems (NeurIPS)},
  year      = {2022},
  url       = {https://arxiv.org/abs/2206.00050}
}

@inproceedings{nam2021diversity,
  title     = {Diversity Matters When Learning From Ensembles},
  author    = {Nam, Giung and Yoon, Jongmin and Lee, Yoonho and Lee, Juho},
  booktitle = {Advances in Neural Information Processing Systems (NeurIPS)},
  year      = {2021},
  url       = {https://openreview.net/forum?id=f_eOQN87eXc}
}

@inproceedings{gorishniy2025tabm,
  title     = {{TabM}: Advancing Tabular Deep Learning with Parameter-Efficient Ensembling},
  author    = {Gorishniy, Yury and Kotelnikov, Akim and Babenko, Artem},
  booktitle = {International Conference on Learning Representations (ICLR)},
  year      = {2025},
  url       = {https://openreview.net/forum?id=Sd4wYYOhmY}
}

@article{zamyatin2026batchensemble,
  title   = {Is {BatchEnsemble} a Single Model? {O}n Calibration and Diversity of Efficient Ensembles},
  author  = {Zamyatin, Anton and Indri, Patrick and Malhotra, Sagar and G{\"a}rtner, Thomas},
  journal = {arXiv preprint arXiv:2601.16936},
  year    = {2026},
  url     = {https://arxiv.org/abs/2601.16936}
}

@inproceedings{hu2022lora,
  title     = {{LoRA}: Low-Rank Adaptation of Large Language Models},
  author    = {Hu, Edward J. and Shen, Yelong and Wallis, Phillip and Allen-Zhu, Zeyuan and Li, Yuanzhi and Wang, Shean and Wang, Lu and Chen, Weizhu},
  booktitle = {International Conference on Learning Representations (ICLR)},
  year      = {2022},
  url       = {https://openreview.net/forum?id=nZeVKeeFYf9}
}

@article{halbheer2024lora,
  title   = {{LoRA-Ensemble}: Efficient Uncertainty Modelling for Self-Attention Networks},
  author  = {Halbheer, Michelle and M{\"u}hlematter, Dominik Jan and Becker, Alexander and Narnhofer, Dominik and Aasen, Helge and Schindler, Konrad and Turkoglu, Mehmet Ozgur},
  journal = {arXiv preprint arXiv:2405.14438},
  year    = {2024},
  url     = {https://arxiv.org/abs/2405.14438}
}

@inproceedings{huang2025hira,
  title     = {{HiRA}: Parameter-Efficient Hadamard High-Rank Adaptation for Large Language Models},
  author    = {Huang, Qiushi and Ko, Tom and Zhuang, Zhan and Tang, Lilian and Zhang, Yu},
  booktitle = {International Conference on Learning Representations (ICLR)},
  year      = {2025},
  note      = {Oral presentation},
  url       = {https://openreview.net/forum?id=TwJrTz9cRS}
}

@inproceedings{gorishniy2022embeddings,
  title     = {On Embeddings for Numerical Features in Tabular Deep Learning},
  author    = {Gorishniy, Yury and Rubachev, Ivan and Babenko, Artem},
  booktitle = {Advances in Neural Information Processing Systems (NeurIPS)},
  year      = {2022}
}

@inproceedings{gorishniy2024modernnca,
  title     = {{TabR}: Tabular Deep Learning Meets Nearest Neighbors},
  author    = {Gorishniy, Yury and Rubachev, Ivan and Kartashev, Nikolay and Shlenskii, Daniil and Kotelnikov, Akim and Babenko, Artem},
  booktitle = {International Conference on Learning Representations (ICLR)},
  year      = {2024}
}

@inproceedings{grinsztajn2022tree,
  title     = {Why do Tree-Based Models Still Outperform Deep Learning on Typical Tabular Data?},
  author    = {Grinsztajn, L{\'e}o and Oyallon, Edouard and Varoquaux, Ga{\"e}l},
  booktitle = {Advances in Neural Information Processing Systems (NeurIPS) Datasets and Benchmarks Track},
  year      = {2022}
}

@inproceedings{mcelfresh2024neural,
  title     = {When Do Neural Nets Outperform Boosted Trees on Tabular Data?},
  author    = {McElfresh, Duncan and Khandagale, Sujay and Valverde, Jonathan and Prasad C, Vishak and Ramakrishnan, Ganesh and Goldblum, Micah and White, Colin},
  booktitle = {Advances in Neural Information Processing Systems (NeurIPS) Datasets and Benchmarks Track},
  year      = {2024}
}

@inproceedings{rubachev2024tabred,
  title     = {{TabReD}: Analyzing Pitfalls and Filling the Gaps in Tabular Deep Learning Benchmarks},
  author    = {Rubachev, Ivan and Kartashev, Nikolay and Gorishniy, Yury and Babenko, Artem},
  booktitle = {International Conference on Learning Representations (ICLR)},
  year      = {2025}
}

@inproceedings{chen2016xgboost,
  title     = {{XGBoost}: A Scalable Tree Boosting System},
  author    = {Chen, Tianqi and Guestrin, Carlos},
  booktitle = {ACM SIGKDD International Conference on Knowledge Discovery and Data Mining (KDD)},
  year      = {2016}
}

@inproceedings{prokhorenkova2018catboost,
  title     = {{CatBoost}: Unbiased Boosting with Categorical Features},
  author    = {Prokhorenkova, Liudmila and Gusev, Gleb and Vorobev, Aleksandr and Dorogush, Anna Veronika and Gulin, Andrey},
  booktitle = {Advances in Neural Information Processing Systems (NeurIPS)},
  year      = {2018}
}

@inproceedings{akiba2019optuna,
  title     = {{Optuna}: A Next-Generation Hyperparameter Optimization Framework},
  author    = {Akiba, Takuya and Sano, Shotaro and Yanase, Toshihiko and Ohta, Takeru and Koyama, Masanori},
  booktitle = {ACM SIGKDD International Conference on Knowledge Discovery and Data Mining (KDD)},
  year      = {2019}
}

@inproceedings{lakshminarayanan2017simple,
  title={Simple and Scalable Predictive Uncertainty Estimation using Deep Ensembles},
  author={Lakshminarayanan, Balaji and Pritzel, Alexander and Blundell, Charles},
  booktitle={Advances in Neural Information Processing Systems (NeurIPS)},
  year={2017}
}

@inproceedings{he2015delving,
  title={Delving deep into rectifiers: Surpassing human-level performance on imagenet classification},
  author={He, Kaiming and Zhang, Xiangyu and Ren, Shaoqing and Sun, Jian},
  booktitle={Proceedings of the IEEE international conference on computer vision},
  pages={1026--1034},
  year={2015}
}

@inproceedings{krogh1994neural,
  title     = {Neural Network Ensembles, Cross Validation, and Active Learning},
  author    = {Krogh, Anders and Vedelsby, Jesper},
  booktitle = {Advances in Neural Information Processing Systems},
  volume    = {7},
  pages     = {231--238},
  year      = {1995},
  publisher = {MIT Press},
  url       = {https://papers.nips.cc/paper_files/paper/1994/hash/b8c37e33defde51cf91e1e03e51657da-Abstract.html}
}

@book{horn2012matrix,
  title={Matrix Analysis},
  author={Horn, Roger A. and Johnson, Charles R.},
  year={2012},
  edition={2nd},
  publisher={Cambridge University Press},
  address={Cambridge}
}

@inproceedings{naeini2015obtaining,
  title={Obtaining well calibrated probabilities using {B}ayesian binning},
  author={Naeini, Mahdi Pakdaman and Cooper, Gregory and Hauskrecht, Milos},
  booktitle={Proceedings of the AAAI Conference on Artificial Intelligence},
  volume={29},
  number={1},
  year={2015}
}

@inproceedings{guo2017calibration,
  title={On calibration of modern neural networks},
  author={Guo, Chuan and Pleiss, Geoff and Sun, Yu and Weinberger, Kilian Q},
  booktitle={International Conference on Machine Learning},
  pages={1321--1330},
  year={2017},
  organization={PMLR}
}


\newpage
\appendix
\section{Proof of Proposition~\ref{prop:expressivity}}
\label{app:proofs}

\paragraph{Notation.} We write $\odot$ for the Hadamard (element-wise) product, $\oslash$ for element-wise division, and $\mathbf{1}$ for the all-ones matrix (dimensions inferred from context). For brevity we write $m := d_{\text{out}}$ and $n := d_{\text{in}}$. The hypothesis classes are:
\begin{align*}
\mathcal{H}_{\text{BE}} &= \left\{ \left(W \odot (s_k r_k^\top)\right)_{k=1}^K : W \in \mathbb{R}^{m \times n},\; r_k \in \mathbb{R}^n,\; s_k \in \mathbb{R}^m \right\}, \\
\mathcal{H}_{\text{LoMETab}}^{(r)} &= \left\{ \left(W \odot (\mathbf{1} + A_k B_k^\top)\right)_{k=1}^K : W \in \mathbb{R}^{m \times n},\; A_k \in \mathbb{R}^{m \times r},\; B_k \in \mathbb{R}^{n \times r} \right\}.
\end{align*}
All shared weights $W$ are assumed to have non-zero entries; this is a generic condition satisfied on a full-measure subset of $\mathbb{R}^{m \times n}$.

\subsection{Part (a): Inclusion $\mathcal{H}_{\text{BE}} \subseteq \mathcal{H}_{\text{LoMETab}}^{(r)}$}
\label{app:proof:ia}

We first state three lemmas. Lemmas~\ref{lem:rank_outer} and~\ref{lem:rank_subadd} are standard (see, e.g., Horn \& Johnson~\cite{horn2012matrix}); We prove Lemmas~\ref{lem:rank_factorization} and~\ref{lem:be_ratio_rank1}, which are the key constructions.

\begin{lemma}[Rank of outer product]
\label{lem:rank_outer}
For any $u \in \mathbb{R}^m$ and $v \in \mathbb{R}^n$, $\mathrm{rank}(u v^\top) \le 1$.
\end{lemma}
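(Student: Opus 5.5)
The plan is to bound the rank by the dimension of the column space, showing directly that every column of $uv^\top$ lies in the one-dimensional (or trivial) subspace $\mathrm{span}\{u\}$.

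Write $v = (v_1,\ldots,v_n)^\top$. By the definition of matrix multiplication, the $(i,j)$ entry of $uv^\top$ is $u_i v_j$. Hence the $j$-th column of $uv^\top$ is exactly $v_j u$. Every column is therefore a scalar multiple of the single vector $u$, and so the column space satisfies
\[
\mathrm{col}(uv^\top) \subseteq \mathrm{span}\{u\}.
\]

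Since $\mathrm{rank}$ equals the dimension of the column space, and $\dim \mathrm{span}\{u\} \le 1$, we get $\mathrm{rank}(uv^\top) \le 1$. This dimension is $0$ when $u = 0$ and $1$ otherwise. For completeness I will also note the sharper statement: the rank is exactly $1$ when both $u \neq 0$ and $v \neq 0$, because then some column $v_j u$ is nonzero. Only the upper bound is needed later, where it bounds $\mathrm{rank}(s_k r_k^\top)$ and, combined with subadditivity, $\mathrm{rank}(s_k r_k^\top - \mathbf{1})$.

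There is no real obstacle here. The only point requiring care is the degenerate case $u = 0$ or $v = 0$, which the inequality (rather than equality) already accommodates.
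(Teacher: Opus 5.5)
Your proof is correct. The $j$-th column of $uv^\top$ is $v_j u$, so $\mathrm{col}(uv^\top)\subseteq\mathrm{span}\{u\}$, which forces the rank to be at most $1$.

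The paper gives no proof of this lemma: it calls it standard and defers to Horn and Johnson. Your column-space argument is the textbook proof that citation stands in for, so nothing is missing. It also covers the paper's other two uses of the lemma, namely $\mathrm{rank}(\mathbf{1}_m\mathbf{1}_n^\top)\le 1$ in Part (a) and $Q_{ij}=uv^\top$ in Lemma~\ref{lem:be_ratio_rank1}.
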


\begin{lemma}[Rank subadditivity]
\label{lem:rank_subadd}
For any $X, Y \in \mathbb{R}^{m \times n}$, $\mathrm{rank}(X + Y) \le \mathrm{rank}(X) + \mathrm{rank}(Y)$.
\end{lemma}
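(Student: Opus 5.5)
The plan is to argue through column spaces, since rank is the dimension of the column space. Write $\mathrm{col}(M) \subseteq \mathbb{R}^m$ for the span of the columns of $M \in \mathbb{R}^{m \times n}$, so that $\mathrm{rank}(M) = \dim \mathrm{col}(M)$. The argument has two steps: first a containment of subspaces, then a dimension count for a sum of subspaces.

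\emph{Step 1 (containment).} Take any vector in $\mathrm{col}(X+Y)$. It has the form $(X+Y)v$ for some $v \in \mathbb{R}^n$, and linearity of matrix multiplication gives $(X+Y)v = Xv + Yv$. Here $Xv \in \mathrm{col}(X)$ and $Yv \in \mathrm{col}(Y)$. Therefore
\begin{equation*}
\mathrm{col}(X+Y) \subseteq \mathrm{col}(X) + \mathrm{col}(Y),
\end{equation*}
where the right-hand side is the subspace sum $\{a+b : a \in \mathrm{col}(X),\, b \in \mathrm{col}(Y)\}$.

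\emph{Step 2 (dimension of a sum).} Choose a basis $\{a_1,\dots,a_p\}$ of $\mathrm{col}(X)$ with $p = \mathrm{rank}(X)$, and a basis $\{b_1,\dots,b_q\}$ of $\mathrm{col}(Y)$ with $q = \mathrm{rank}(Y)$. The combined list $a_1,\dots,a_p,b_1,\dots,b_q$ spans $\mathrm{col}(X)+\mathrm{col}(Y)$. Any spanning list contains a basis, so this subspace has dimension at most $p+q$. (Alternatively, one can invoke Grassmann's formula $\dim(U+V) = \dim U + \dim V - \dim(U\cap V)$.)

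\emph{Conclusion.} Monotonicity of dimension under inclusion, together with Step 1 and Step 2, gives
\begin{equation*}
\mathrm{rank}(X+Y) \le \dim\bigl(\mathrm{col}(X)+\mathrm{col}(Y)\bigr) \le \mathrm{rank}(X) + \mathrm{rank}(Y).
\end{equation*}

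None of these steps is a real obstacle. The only point needing care is the direction of the inclusion in Step 1: $\mathrm{col}(X+Y)$ is generally a proper subspace of $\mathrm{col}(X)+\mathrm{col}(Y)$, because cancellations can occur. Only an inequality is claimed, so this causes no difficulty.

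This lemma will be combined with Lemma~\ref{lem:rank_outer} in the main construction of Part (a). There, $s_k r_k^\top - \mathbf{1} = s_k r_k^\top + (-\mathbf{1}_m)\mathbf{1}_n^\top$ is a sum of two outer products, so it has rank at most $2$. Hence, for $r \ge 2$, it factors as $A_k B_k^\top$.
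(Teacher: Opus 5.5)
Your column-space argument is correct and complete. The inclusion $\mathrm{col}(X+Y)\subseteq\mathrm{col}(X)+\mathrm{col}(Y)$ and the bound $\dim(\mathrm{col}(X)+\mathrm{col}(Y))\le\mathrm{rank}(X)+\mathrm{rank}(Y)$ are exactly what is needed. Your closing remark also matches how the paper applies the lemma to $s_kr_k^\top-\mathbf{1}$.

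The paper gives no proof of this lemma: it calls it standard and cites Horn \& Johnson. Your write-up supplies the textbook argument behind that citation, which makes the appendix self-contained at the cost of a few lines.

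One wording point: the inclusion in Step~1 \emph{can} be strict (e.g.\ $Y=-X\neq 0$), but it is not ``generally'' strict. For instance, it is an equality whenever $X=AB^\top$ and $Y=CD^\top$ with $[\,B\;\;D\,]$ of full column rank.
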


\begin{lemma}[Rank-$\rho$ factorization with padding]
\label{lem:rank_factorization}
Let $r \ge \rho \ge 0$ and $M \in \mathbb{R}^{m \times n}$ with $\mathrm{rank}(M) = \rho$. Then there exist $A \in \mathbb{R}^{m \times r}$ and $B \in \mathbb{R}^{n \times r}$ such that $A B^\top = M$.
\end{lemma}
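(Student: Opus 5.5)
The statement to prove is Lemma~\ref{lem:rank_factorization}: a rank-$\rho$ matrix factors as $AB^\top$ with inner dimension exactly $r\ge\rho$. The plan has two steps. First I will build a thin factorization with inner dimension $\rho$. Then I will pad it with zero columns up to width $r$.

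\emph{Step 1 (thin factorization).} The case $\rho=0$ is trivial: $M=0$, so I take $A=0\in\mathbb{R}^{m\times r}$ and $B=0\in\mathbb{R}^{n\times r}$. For $\rho\ge1$, I will use the standard rank factorization. Let the columns of $C\in\mathbb{R}^{m\times\rho}$ be a basis of the column space of $M$, which has dimension $\rho$ by definition of rank. Each column $M e_j$ lies in that space, so it can be written uniquely as $C f_j$ with $f_j\in\mathbb{R}^\rho$. Collecting these coefficient vectors as the rows of $F\in\mathbb{R}^{n\times\rho}$ gives $M=CF^\top$. An alternative is a compact SVD $M=U_\rho\Sigma_\rho V_\rho^\top$ with $C=U_\rho\Sigma_\rho$ and $F=V_\rho$. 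I will cite whichever is more convenient; the basis argument is self-contained.

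\emph{Step 2 (padding).} Define $A=[\,C\;\;0_{m\times(r-\rho)}\,]\in\mathbb{R}^{m\times r}$ and $B=[\,F\;\;0_{n\times(r-\rho)}\,]\in\mathbb{R}^{n\times r}$. The product $AB^\top$ is the sum of outer products of matching columns, $AB^\top=\sum_{i=1}^{r} a_i b_i^\top$. The padded columns contribute zero, so $AB^\top=CF^\top=M$. This is exactly the factorization required.

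There is no genuine obstacle here. The only point needing care is the bookkeeping in the padding step: the widths must be $m\times r$ and $n\times r$ to match the parameterization of $\mathcal{H}_{\text{LoMETab}}^{(r)}$, and the case $\rho=0$ must be handled so the argument does not refer to an empty basis. Downstream, this lemma will be combined with Lemmas~\ref{lem:rank_outer} and~\ref{lem:rank_subadd}. Those give $\mathrm{rank}(s_kr_k^\top-\mathbf{1})\le 2\le r$, since $\mathbf{1}=\mathbf{1}_m\mathbf{1}_n^\top$. Applying the present lemma with $\rho=\mathrm{rank}(s_kr_k^\top-\mathbf{1})$ then yields $A_k,B_k$ with $\mathbf{1}+A_kB_k^\top=s_kr_k^\top$, which gives the inclusion in Proposition~\ref{prop:expressivity}.
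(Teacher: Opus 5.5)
Your proof is correct and follows the paper's: it also handles $\rho=0$ with zero matrices, gets a thin factorization for $\rho\ge1$ from the compact SVD with $\Sigma$ absorbed into the left factor (your stated alternative), and pads both factors with $r-\rho$ zero columns. Building the thin factorization from a column-space basis is a slightly more elementary substitute for the SVD, but it does not change the argument's structure.
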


\begin{proof}
We proceed by case analysis on $\rho$.

\textbf{Case $\rho = 0$.} Then $M = \mathbf{0}_{m \times n}$. Setting $A := \mathbf{0}_{m \times r}$ and $B := \mathbf{0}_{n \times r}$ gives
\[
A B^\top = \mathbf{0}_{m \times r} \cdot \mathbf{0}_{r \times n} = \mathbf{0}_{m \times n} = M.
\]

\textbf{Case $\rho \ge 1$.} By the compact SVD theorem (e.g., Horn \& Johnson~\cite{horn2012matrix}), there exist $U \in \mathbb{R}^{m \times \rho}$ with $U^\top U = I_\rho$, $V \in \mathbb{R}^{n \times \rho}$ with $V^\top V = I_\rho$, and $\Sigma = \mathrm{diag}(\sigma_1, \ldots, \sigma_\rho) \in \mathbb{R}^{\rho \times \rho}$ with $\sigma_i > 0$, such that
\[
M = U \Sigma V^\top.
\]
Absorbing $\Sigma$ into $U$, set $\tilde{U} := U \Sigma \in \mathbb{R}^{m \times \rho}$, so that $M = \tilde{U} V^\top$. Since $r - \rho \ge 0$, we pad with zero columns:
\[
A := \begin{bmatrix} \tilde{U} & \mathbf{0}_{m \times (r-\rho)} \end{bmatrix} \in \mathbb{R}^{m \times r}, \qquad
B := \begin{bmatrix} V & \mathbf{0}_{n \times (r-\rho)} \end{bmatrix} \in \mathbb{R}^{n \times r}.
\]
Writing $B^\top = \begin{bmatrix} V^\top \\ \mathbf{0}_{(r-\rho) \times n} \end{bmatrix}$ and applying block matrix multiplication,
\[
A B^\top = \tilde{U} V^\top + \mathbf{0}_{m \times (r-\rho)} \cdot \mathbf{0}_{(r-\rho) \times n} = \tilde{U} V^\top = M. \qedhere
\]
\end{proof}



\begin{lemma}
\label{lem:be_ratio_rank1}
For any $(W_1, \ldots, W_K) \in \mathcal{H}_{\text{BE}}$ with shared weight $W$ and scaling vectors $s_k, r_k$ all having nonzero entries, and for any two members $i \ne j$, the element-wise ratio $Q_{ij} := W_i \oslash W_j$ satisfies $\mathrm{rank}(Q_{ij}) \le 1$.
\end{lemma}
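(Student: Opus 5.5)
The plan is a direct entrywise computation. Since all entries of $W$, $s_k$, $r_k$ are assumed nonzero, every entry of $W_j = W\odot(s_j r_j^\top)$ is nonzero, so the ratio $Q_{ij} = W_i \oslash W_j$ is well defined. For each $(p,q)$ we have $(W_i)_{pq} = W_{pq}(s_i)_p(r_i)_q$ and $(W_j)_{pq} = W_{pq}(s_j)_p(r_j)_q$.

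The first step is to cancel the shared factor $W_{pq}$, which is legitimate because it is nonzero. This gives
\[
(Q_{ij})_{pq} = \frac{(s_i)_p}{(s_j)_p}\cdot\frac{(r_i)_q}{(r_j)_q}.
\]

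The second step is to define $u := s_i \oslash s_j \in \mathbb{R}^m$ and $v := r_i \oslash r_j \in \mathbb{R}^n$. Both are well defined because $s_j$ and $r_j$ have nonzero entries. With these vectors the identity above reads $Q_{ij} = u v^\top$.

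The final step is to invoke Lemma~\ref{lem:rank_outer}, which immediately gives $\mathrm{rank}(Q_{ij}) \le 1$. In fact $u$ and $v$ have nonzero entries, so the rank is exactly one, but only the upper bound is needed.

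There is no real obstacle in this argument. The only point requiring care is to make explicit that the nonzero-entry hypotheses guarantee every division is defined. This is needed both for forming $Q_{ij}$ itself and for cancelling the common factor $W_{pq}$. It is precisely this cancellation that removes the shared weight from the ratio and exposes the outer-product structure.
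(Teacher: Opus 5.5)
Your proof is correct and follows the paper's argument essentially line for line: you cancel the common nonzero factor $W_{ab}$, set $u = s_i \oslash s_j$ and $v = r_i \oslash r_j$, and conclude that $Q_{ij} = uv^\top$ has rank at most one. Your side remark that the rank is in fact exactly one, because $u$ and $v$ have nonzero entries, is also correct, though the lemma does not need it.
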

\begin{proof}
By the definition of BE, $W_i = W \odot (s_i r_i^\top)$ and $W_j = W \odot (s_j r_j^\top)$. Since $W$, $s_j$, and $r_j$ all have nonzero entries, $Q_{ij}$ is well-defined. Taking the element-wise ratio:
\[
(Q_{ij})_{ab} = \frac{W_{ab}\, (s_i)_a (r_i)_b}{W_{ab}\, (s_j)_a (r_j)_b} = \frac{(s_i)_a}{(s_j)_a} \cdot \frac{(r_i)_b}{(r_j)_b}.
\]
Defining $u \in \mathbb{R}^m$ with $u_a = (s_i)_a / (s_j)_a$ and $v \in \mathbb{R}^n$ with $v_b = (r_i)_b / (r_j)_b$, we have $Q_{ij} = u v^\top$, an outer product of rank at most 1.
\end{proof}

\begin{proof}[Main proof of Part (a)]
Let $(W_1^{\text{BE}}, \ldots, W_K^{\text{BE}}) \in \mathcal{H}_{\text{BE}}$. By definition, there exist $W \in \mathbb{R}^{m \times n}$, $s_k \in \mathbb{R}^m$, and $r_k \in \mathbb{R}^n$ such that
\[
W_k^{\text{BE}} = W \odot (s_k r_k^\top), \quad k = 1, \ldots, K.
\]
We construct $A_k, B_k$ such that $W \odot (\mathbf{1} + A_k B_k^\top) = W_k^{\mathrm{BE}}$ for each $k = 1, \ldots, K$.

It suffices to find $A_k$ and $B_k$ satisfying
\begin{equation}
\label{eq:mask_identity_ia}
A_k B_k^\top = s_k r_k^\top - \mathbf{1},
\end{equation}
since then $W \odot (\mathbf{1} + A_k B_k^\top) = W \odot (s_k r_k^\top) = W_k^{\text{BE}}$.
Let $M_k := s_k r_k^\top - \mathbf{1}$. Since $\mathbf{1} = \mathbf{1}_m \mathbf{1}_n^\top$, 
Lemmas~\ref{lem:rank_outer} and~\ref{lem:rank_subadd} give
\[
\rho_k := \mathrm{rank}(M_k) \le \mathrm{rank}(s_k r_k^\top) + \mathrm{rank}(\mathbf{1}) \le 2.
\]
Since $r \ge 2 \ge \rho_k$, Lemma~\ref{lem:rank_factorization} yields $A_k \in \mathbb{R}^{m \times r}$ 
and $B_k \in \mathbb{R}^{n \times r}$ with $A_k B_k^\top = M_k$, 
establishing~\eqref{eq:mask_identity_ia}. Since the choice of BE element was arbitrary,
$\mathcal{H}_{\text{BE}} \subseteq \mathcal{H}_{\text{LoMETab}}^{(r)}$.





\subsection{Part (b): Strictness $\mathcal{H}_{\text{LoMETab}}^{(r)} \not\subseteq \mathcal{H}_{\text{BE}}$}

We exhibit an element of $\mathcal{H}_{\text{LoMETab}}^{(r)}$ that cannot be represented in $\mathcal{H}_{\text{BE}}$.


\paragraph{Construction of counterexample.}
Let $W \in \mathbb{R}^{m \times n}$ be any matrix with all entries nonzero, and let $e_1, e_2 \in \mathbb{R}^m$ and $f_1, f_2 \in \mathbb{R}^n$ denote the first two standard basis vectors. Define
\[
M_1 := e_1 f_1^\top + e_2 f_2^\top, \qquad M_2 := \mathbf{0}.
\]
Since $M_1$ has exactly two nonzero rows, namely $f_1^\top$ and $f_2^\top$, which are linearly independent, $\mathrm{rank}(M_1) = 2$. By Lemma~\ref{lem:rank_factorization}, there exist $A_1 \in \mathbb{R}^{m \times r}$ and $B_1 \in \mathbb{R}^{n \times r}$ with $A_1 B_1^\top = M_1$. Set $A_2 = \mathbf{0}$, so that $A_2 B_2^\top = M_2 = \mathbf{0}$. Define
\[
W_1 := W \odot (\mathbf{1} + M_1), \qquad W_2 := W \odot (\mathbf{1} + M_2) = W.
\]
Then $(W_1, W_2) \in \mathcal{H}_{\mathrm{LoMETab}}^{(r)}$ by construction. We now show $(W_1, W_2) \notin \mathcal{H}_{\mathrm{BE}}$.
\[
W_1 = W \odot (\mathbf{1} + A_1 B_1^\top), \qquad W_2 = W \odot \mathbf{1} = W,
\]
and the element-wise ratio $Q_{12} := W_1 \oslash W_2$ satisfies
\[
(Q_{12})_{ab} = \begin{cases} 2 & (a,b) \in \{(1,1),(2,2)\}, \\ 1 & \text{otherwise.} \end{cases}
\]
The top-left $2 \times 2$ submatrix of $Q_{12}$ is $\begin{pmatrix} 2 & 1 \\ 1 & 2 \end{pmatrix}$ with determinant $3 \ne 0$, hence $\mathrm{rank}(Q_{12}) \ge 2$. By the contrapositive of Lemma~\ref{lem:be_ratio_rank1}, no BE parameterization can produce the pair $(W_1, W_2)$. Therefore $\mathcal{H}_{\text{BE}} \subsetneq \mathcal{H}_{\text{LoMETab}}^{(r)}$. \end{proof}

\section{Benchmark Protocol, Baseline Sourcing and Dataset Exclusion}
\label{app:env_check}


This appendix details the benchmark protocol used in Sec.~\ref{sec:experiments}, including the inherited TabM pipeline, baseline result sourcing, an environment consistency check, the rationale for excluding the house dataset, and the ranking methodology.

\paragraph{Inherited pipeline and baseline sourcing.} Our LoMETab implementation inherits the official TabM codebase~\citep{gorishniy2025tabm}, which itself adopts the experimental framework of~\citet{gorishniy2024modernnca}. Our implementation inherits identical dataset loaders, train/validation/test splits, preprocessing (TypedFeatureEmbeddings for categorical features, TargetScaler for regression targets), metric computation, and training protocol from this codebase. All baseline results are taken from the appendix tables of~\citet{gorishniy2025tabm} and the associated report files released in the official repository, which provides configuration and metric JSON files for all reported methods under a unified evaluation pipeline.
Because LoMETab is implemented within the same forked pipeline, its results use the same data handling, preprocessing, splits, and metric computation as the released TabM benchmark artifacts; the compared baselines are taken from the corresponding published reports.

\paragraph{Environment consistency check.} To verify that our forked codebase preserves the evaluation environment of~\citet{gorishniy2025tabm}, we reproduced the publicly released TabM configuration on the \texttt{brazilian\_houses} dataset in our environment using 15 independent random seeds. The resulting mean test RMSE is consistent with the appendix value reported by~\citet{gorishniy2025tabm}, supporting the claim that LoMETab and baseline results are evaluated under the same pipeline.

\paragraph{Excluded dataset: \texttt{house}.}
During our pipeline verification, we identified a 
discrepancy on the \texttt{house} dataset. Its target distribution has median 
33{,}200 and mean 50{,}074, right-skewed with the bottom 10\% near 14{,}999---values 
on the order of $10^4$. However, RMSE values reported by prior tabular DL methods 
on this dataset cluster tightly around 3.0 across nearly all models, which is 
inconsistent with both the raw target scale and standard z-score normalization. 
The cause of this mismatch is unclear from publicly available materials, and to 
avoid potential pipeline inconsistency we exclude \texttt{house} from our evaluation.

\paragraph{Ranking methodology.} Rankings reported in Sec.~\ref{sec:exp:main} are computed from the published mean scores of each method on each dataset, following standard protocol in recent tabular DL benchmarks~\citep{gorishniy2024modernnca, rubachev2024tabred}. Ties are broken by the alphabetical order of method names. We report rank distributions and paired comparisons as descriptive evidence; we do not claim statistical significance of rank differences, as seed-level raw scores are not available for all baselines.

\section{Diversity and Calibration Metrics: Full Definitions}
\label{app:diversity_metrics}

We provide the full definitions of the diversity metrics used in Sec.~\ref{sec:exp:div_metrics}. Throughout, $K$ denotes the ensemble size, $N$ the number of test samples, $p_i(x_n)$ the predicted class-probability vector of member $i$ on test sample $x_n$, and $f_i(x_n)$ the scalar prediction of member $i$ on $x_n$.

\paragraph{Pairwise KL (classification).}
We measure probabilistic diversity by the symmetric pairwise Kullback--Leibler divergence between members' predictive distributions, averaged over all member pairs and test samples:
\begin{equation}
\mathrm{KL} = \frac{2}{K(K-1)} \sum_{i=1}^{K} \sum_{j=i+1}^{K} \frac{1}{N} \sum_{n=1}^{N} \tfrac{1}{2}\left[\mathrm{KL}(p_i(x_n) \Vert p_j(x_n)) + \mathrm{KL}(p_j(x_n) \Vert p_i(x_n))\right].
\end{equation}
The symmetric form ensures that the metric is invariant under the choice of reference member. Following~\citet{turkoglu2022filmensemble}, we average over all $\binom{K}{2}$ pairs.

\paragraph{Pairwise argmax disagreement (classification).}
We measure decision-level diversity by the fraction of test samples on which two members predict different classes, averaged over all member pairs. Let
\(\hat{y}_i(x_n)=\arg\max_c p_i^{(c)}(x_n)\) denote the class predicted by member \(i\). We define
\begin{equation}
    D =
    \frac{2}{K(K-1)}
    \sum_{i=1}^{K} \sum_{j=i+1}^{K}
    \frac{1}{N}
    \sum_{n=1}^{N}
    \mathbb{1}\!\left[\hat{y}_i(x_n) \ne \hat{y}_j(x_n)\right],
\end{equation}
where \(\mathbb{1}[\cdot]\) is the indicator function. This is the empirical pairwise disagreement rate following~\citet{nam2021diversity}.


\paragraph{Krogh--Vedelsby ambiguity (regression).}
For regression, neither pairwise KL nor argmax disagreement is defined since members produce scalar predictions rather than probability vectors. We instead use the ensemble \emph{ambiguity} term from the Krogh--Vedelsby decomposition~\citep{krogh1994neural}, defined as the average squared deviation of individual member predictions from the ensemble mean:
\begin{equation}
A = \frac{1}{NK} \sum_{n=1}^{N} \sum_{i=1}^{K} \left(f_i(x_n) - \bar{f}(x_n)\right)^2, \quad \bar{f}(x_n) = \frac{1}{K}\sum_{i=1}^{K} f_i(x_n).
\end{equation}
Ambiguity is computed in the original (un-normalized) target scale.

To enable comparison across regression tasks with different target scales, we additionally report \emph{normalized ambiguity}:
\begin{equation}
\tilde{A} = \frac{A}{\sigma_y^2},
\end{equation}
where $\sigma_y^2$ is the variance of the raw training targets.

\paragraph{Krogh--Vedelsby decomposition.}
The ambiguity term arises naturally from the well-known decomposition of ensemble error~\citep{krogh1994neural}:
\begin{equation}
\underbrace{\bigl(\bar{f}(x) - y\bigr)^2}_{\text{ensemble error}} 
= \underbrace{\frac{1}{K}\sum_{k=1}^{K} \bigl(f_k(x) - y\bigr)^2}_{\text{average member error}} 
- \underbrace{\frac{1}{K}\sum_{k=1}^{K} \bigl(f_k(x) - \bar{f}(x)\bigr)^2}_{\text{ambiguity}}.
\end{equation}
Hence ensemble error decreases only when the gain in ambiguity outweighs the increase in average member error---a relationship we leverage when interpreting the regression diversity--accuracy trade-off in §\ref{sec:exp:controllability}.

\paragraph{Expected Calibration Error (classification).}
We measure calibration quality via the Expected Calibration Error (ECE)~\citep{naeini2015obtaining},
which quantifies the discrepancy between predicted confidence and empirical accuracy. For ECE, we
use the ensemble-averaged predictive probability
\[
\bar{p}(x_n)=K^{-1}\sum_{i=1}^{K}p_i(x_n).
\]
The predicted label and confidence are
\[
\hat{y}_n=\arg\max_c \bar{p}^{(c)}(x_n),
\qquad
\mathrm{conf}_n=\max_c \bar{p}^{(c)}(x_n).
\]
We partition test samples into \(M\) equally spaced confidence bins
\(\{B_m\}_{m=1}^{M}\) and compute
\begin{equation}
\mathrm{ECE}
=
\sum_{m=1}^{M}
\frac{|B_m|}{N_{\mathrm{test}}}
\left|
\mathrm{acc}(B_m)-\mathrm{conf}(B_m)
\right|,
\end{equation}
where
\[
\mathrm{acc}(B_m)
=
\frac{1}{|B_m|}
\sum_{n\in B_m}
\mathbb{1}[\hat{y}_n=y_n],
\qquad
\mathrm{conf}(B_m)
=
\frac{1}{|B_m|}
\sum_{n\in B_m}
\mathrm{conf}_n.
\]
Empty bins contribute zero. We use \(M=15\) bins following~\citet{guo2017calibration}.
ECE\(\downarrow\) indicates better calibration.



\section{Checklist Details}

\paragraph{Existing assets and licenses}
\label{app:licenses}
Our implementation builds on the official TabM codebase and benchmark artifacts released by
Gorishniy et al.~\citep{gorishniy2025tabm}, including dataset loaders, preprocessing routines,
train/validation/test splits, metric computation, hyperparameter reports, and baseline result files.
The official TabM repository is released under the Apache-2.0 License. The TabZilla source
datasets used by the benchmark originate from OpenML; for each dataset, we follow the original
OpenML metadata, including dataset identifiers, creators, citations, and license fields. We do not
redistribute raw datasets; instead, datasets are accessed through the original benchmark/OpenML
pipeline. Hyperparameter optimization uses Optuna~\citep{akiba2019optuna}, which is released
under the MIT License. We credit all original papers and asset owners in the relevant sections and
respect the corresponding license terms.

\paragraph{Broader impacts}
\label{app:etc:broader_impacts}
LoMETab is a general-purpose tabular learning method. Potential positive impacts include improving predictive modeling in domains where tabular data is common, such as operations, science, and business analytics. Potential negative impacts may arise if such models are deployed in high-stakes settings such as finance, healthcare, or employment without appropriate validation, fairness analysis, calibration checks, and human oversight. Our work does not directly address deployment safety or fairness, and practitioners should evaluate these aspects before deployment.


\section{Hyperparameter Tuning Details}
\label{app:hyperparameters}

This appendix provides the detailed tuning protocol for LoMETab, complementing the high-level description in Sec.~\ref{sec:experiments}.

\paragraph{Tuning framework.} We use Optuna~\citep{akiba2019optuna} with the MedianPruner for pruning underperforming trials. The tuning objective is the validation metric: RMSE for regression (minimized), and accuracy for classification (binary and multiclass, maximized).

\paragraph{Trial budget.} We use up to 100 Optuna trials per (variant, dataset) combination. Exact per-dataset trial counts, which may be reduced for computationally expensive datasets, are released with our code. Each (variant, dataset) pair is tuned independently.

\paragraph{Search space.}

The hyperparameters tuned by Optuna for the main benchmark evaluation are summarized in Table~\ref{tab:optuna_hparams}. This HPO space differs from the controlled sweep grids used for analysis: Sec.~\ref{sec:exp:controllability_grid} sweeps \(\sigma_{\mathrm{init}}\in\{0.1,0.3,0.5,1,2,3\}\) at fixed \(K=32\), while Sec.~\ref{sec:exp:design_axes} uses single-axis sweeps over \(r\in\{1,2,4,8,16\}\) and \(\sigma_{\mathrm{init}}\in\{0.1,0.3,0.5,1,2\}\).

\begin{table}[h!]
\centering
\caption{
    The hyperparameter tuning space for LoMETab optimized with Optuna.
}
\label{tab:optuna_hparams}
{\renewcommand{\arraystretch}{1.2}
\begin{tabular}{ll}
    \toprule
    Parameter           & Distribution or Value\\
    \midrule
    Ensemble size $K$   & $\{16, 32\}$ \\
    Rank $r$            & $\{1, 2, 4, 8, 16\}$ \\
    Initialization scale $\sigma_{\text{init}}$ & $\{0.1, 0.3, 0.5, 1.0\}$ \\
    Hidden dimension    & $\mathrm{UniformInt}[64,1024]$ (step=16) \\
    \# layers           & $\mathrm{UniformInt}[1,4]$ \\
    Learning rate       & $\mathrm{LogUniform}[1e\text{-}4, 5e\text{-}3]$ \\
    Weight decay        & $\{0, \mathrm{LogUniform}[1e\text{-}4, 1e\text{-}1]\}$ \\
    Dropout             & $\{0, \mathrm{Uniform}[0.0,0.5]\}$ \\
    PLE embedding dim $d_{\text{emb}}$ & $\mathrm{UniformInt}[8,32]$ (step=4) \\
    \# PLE bins         & $\mathrm{UniformInt}[2,128]$ \\
    \bottomrule
\end{tabular}}
\end{table}


\paragraph{Training protocol.} All trials use AdamW with a batch size of 256, a maximum of 300 epochs, gradient clipping at 1.0, and mixed-precision (AMP) training. Early stopping is applied with a patience of 16 epochs on the primary validation metric.

\paragraph{Seed evaluation.} Once tuning completes, the Optuna-selected configuration for each (variant, dataset) pair is evaluated with 15 independent random seeds. The mean test metric across seeds is reported as the final score for that (variant, dataset) in Sec.~\ref{sec:exp:main}.






\section{Compute Resources}
\label{app:compute_resources}

\paragraph{Hardware and accounting scope.} All experiments were conducted on an internal on-premise GPU server equipped with a single NVIDIA H100 GPU (80GB VRAM). The reported results include both hyperparameter optimization and multi-seed evaluation across all datasets.

\paragraph{Compute time.} We provide a detailed breakdown of computational cost for each dataset in Tab.~\ref{tab:time_per_dataset}. The table reports hyperparameter tuning time, per-seed runtime (training + inference), and the number of random seeds used. This allows estimating both per-dataset and total compute requirements.

In total, hyperparameter tuning across all datasets requires approximately 64.8 GPU-hours, while multi-seed evaluation (15 seeds across 77 datasets) requires 14.1 GPU-hours, resulting in an overall compute cost of about 79.0 GPU-hours (3.3 GPU-days).
In our setup, hyperparameter optimization constitutes the dominant portion of the total compute, while individual training runs are relatively lightweight. Despite evaluating multiple seeds and using an ensemble of $K$ submodels, the overall computational cost remains manageable due to efficient parameterization.
All reported experiments correspond to the final configurations after hyperparameter search.

\begin{longtable}{llllr}
\caption{Per-dataset compute time for LoMETab. Tuning time is total HPO time (Optuna, up to 100 trials). Per-Seed time denotes training+inference time per seed over 15 seeds.} \label{tab:time_per_dataset} \\
\toprule
Dataset & Fold & Tuning Time (min) & Per-Seed Time (sec) & \# Seeds \\
\midrule
\endfirsthead
\caption[]{Per-dataset compute time for LoMETab. Tuning time is total HPO time (Optuna, up to 100 trials). Per-Seed time denotes training+inference time per seed over 15 seeds.} \\
\toprule
Dataset & Fold & Tuning Time (min) & Per-Seed Time (sec) & \# Seeds \\
\midrule
\endhead
\midrule
\multicolumn{5}{r}{Continued on next page} \\
\midrule
\endfoot
\bottomrule
\endlastfoot
adult & - & 64.5 & 16.7±1.5 & 15 \\
black-friday & - & 124.0 & 90.1±11.8 & 15 \\
california & - & 58.7 & 90.5±28.8 & 15 \\
churn & - & 30.7 & 11.4±2.3 & 15 \\
road-safety & 0 & 49.3 & 101.1±17.8 & 15 \\
KDDCup09\_upsell… & 0 & 13.9 & 4.4±0.7 & 15 \\
KDDCup09\_upsell… & 1 & 14.4 & 6.9±2.1 & 15 \\
KDDCup09\_upsell… & 2 & 13.2 & 6.3±1.7 & 15 \\
MiniBooNE & 0 & 112.3 & 18.3±2.9 & 15 \\
jannis & 0 & 42.7 & 19.0±3.6 & 15 \\
MagicTelescope & 0 & 29.8 & 15.9±2.0 & 15 \\
bank-marketing & 0 & 15.6 & 30.6±3.7 & 15 \\
credit & 0 & 25.8 & 16.9±1.9 & 15 \\
kdd\_ipums\_la\_97… & 0 & 20.9 & 7.2±1.5 & 15 \\
phoneme & 0 & 21.0 & 11.4±2.5 & 15 \\
wine & 0 & 15.1 & 12.5±1.6 & 15 \\
MagicTelescope & 1 & 24.1 & 13.3±2.0 & 15 \\
bank-marketing & 1 & 32.1 & 8.7±2.5 & 15 \\
credit & 1 & 21.3 & 9.2±2.2 & 15 \\
kdd\_ipums\_la\_97… & 1 & 13.7 & 7.4±1.3 & 15 \\
phoneme & 1 & 38.4 & 25.3±5.3 & 15 \\
wine & 1 & 29.7 & 20.2±7.4 & 15 \\
MagicTelescope & 2 & 22.0 & 32.8±4.4 & 15 \\
bank-marketing & 2 & 42.7 & 17.9±2.8 & 15 \\
kdd\_ipums\_la\_97… & 2 & 27.0 & 7.2±1.0 & 15 \\
phoneme & 2 & 9.2 & 11.9±2.1 & 15 \\
wine & 2 & 13.7 & 10.4±2.2 & 15 \\
phoneme & 3 & 16.8 & 7.9±2.5 & 15 \\
wine & 3 & 13.6 & 12.6±2.8 & 15 \\
phoneme & 4 & 15.5 & 9.6±1.2 & 15 \\
wine & 4 & 17.5 & 12.9±3.5 & 15 \\
covtype2 & - & 464.2 & 613.1±139.1 & 15 \\
diamond & - & 84.5 & 33.1±12.2 & 15 \\
higgs-small & - & 113.9 & 79.6±19.2 & 15 \\
microsoft & - & 259.7 & 172.9±40.8 & 15 \\
otto & - & 86.0 & 66.6±29.6 & 15 \\
nyc-taxi-green-… & 0 & 199.0 & 134.9±25.1 & 15 \\
particulate-mat… & 0 & 108.7 & 184.8±45.7 & 15 \\
Brazilian\_house… & 0 & 33.3 & 17.8±9.9 & 15 \\
Mercedes\_Benz\_G… & 0 & 19.7 & 7.7±1.4 & 15 \\
OnlineNewsPopul… & 0 & 25.9 & 23.5±3.0 & 15 \\
analcatdata\_sup… & 0 & 19.5 & 7.0±0.9 & 15 \\
house\_sales & 0 & 41.5 & 37.3±10.2 & 15 \\
Brazilian\_house… & 1 & 30.1 & 69.2±5.1 & 15 \\
Mercedes\_Benz\_G… & 1 & 20.6 & 6.2±1.5 & 15 \\
analcatdata\_sup… & 1 & 18.5 & 8.8±2.1 & 15 \\
Brazilian\_house… & 2 & 42.3 & 26.8±8.3 & 15 \\
Mercedes\_Benz\_G… & 2 & 18.2 & 7.3±1.4 & 15 \\
analcatdata\_sup… & 2 & 22.0 & 19.0±7.5 & 15 \\
Mercedes\_Benz\_G… & 3 & 21.1 & 4.6±1.0 & 15 \\
analcatdata\_sup… & 3 & 40.1 & 24.4±14.1 & 15 \\
Mercedes\_Benz\_G… & 4 & 26.5 & 9.3±2.7 & 15 \\
analcatdata\_sup… & 4 & 20.1 & 27.6±13.3 & 15 \\
year & 0 & 101.6 & 57.2±11.5 & 15 \\
Ailerons & 0 & 52.2 & 33.4±12.2 & 15 \\
MiamiHousing201… & 0 & 41.5 & 38.1±11.9 & 15 \\
cpu\_act & 0 & 45.6 & 50.0±8.7 & 15 \\
elevators & 0 & 40.4 & 35.4±7.8 & 15 \\
fifa & 0 & 61.1 & 56.1±9.9 & 15 \\
isolet & 0 & 60.5 & 91.1±29.6 & 15 \\
medical\_charges & 0 & 69.4 & 50.2±14.2 & 15 \\
pol & 0 & 62.7 & 44.7±21.4 & 15 \\
superconduct & 0 & 45.5 & 62.1±9.6 & 15 \\
wine\_quality & 0 & 30.8 & 26.4±11.4 & 15 \\
Ailerons & 1 & 66.2 & 37.4±9.4 & 15 \\
MiamiHousing201… & 1 & 32.8 & 30.0±8.2 & 15 \\
cpu\_act & 1 & 68.5 & 95.1±18.6 & 15 \\
elevators & 1 & 41.2 & 42.7±8.9 & 15 \\
fifa & 1 & 64.5 & 55.8±14.5 & 15 \\
isolet & 1 & 33.9 & 56.5±40.7 & 15 \\
pol & 1 & 45.8 & 34.2±11.2 & 15 \\
wine\_quality & 1 & 37.1 & 36.2±9.0 & 15 \\
Ailerons & 2 & 32.7 & 26.0±6.4 & 15 \\
MiamiHousing201… & 2 & 50.1 & 21.5±6.5 & 15 \\
cpu\_act & 2 & 31.6 & 29.6±6.5 & 15 \\
isolet & 2 & 21.0 & 29.7±25.7 & 15 \\
wine\_quality & 2 & 49.2 & 103.5±33.1 & 15 \\
\end{longtable}

\section{Diversity Grid: Full Numerical Results}
\label{app:diversity_grid}
Tables~\ref{tab:diversity_grid_cls} and~\ref{tab:diversity_grid_reg} report the full numerical results of the diversity grid experiment summarized in Sec.~\ref{sec:exp:controllability_grid}, covering the four datasets featured in the main text. Each cell reports the mean and standard deviation over 15 seeds. Classification datasets report pairwise KL divergence, argmax disagreement, and test accuracy; regression datasets report Krogh--Vedelsby ambiguity and test RMSE. Full grid results for all remaining datasets are provided in the accompanying code repository.

\begin{table}[h!]
\centering\small
\caption{Diversity grid: classification results (mean $\pm$ std over 15 seeds).}
\label{tab:diversity_grid_cls}
\resizebox{\linewidth}{!}{%
\begin{tabular}{cc|ccc|ccc}
\toprule
& & \multicolumn{3}{c|}{\textbf{Adult}} & \multicolumn{3}{c}{\textbf{Higgs-Small}} \\
$r$ & $\sigma$ & KL $\uparrow$ & Disagg. $\uparrow$ & Acc. $\uparrow$ & KL $\uparrow$ & Disagg. $\uparrow$ & Acc. $\uparrow$ \\
\midrule
1 & 0.1 & $0.001 \pm 0.000$ & $0.008 \pm 0.001$ & $0.865 \pm 0.001$ & $0.000 \pm 0.000$ & $0.008 \pm 0.001$ & $0.720 \pm 0.001$ \\
1 & 0.3 & $0.002 \pm 0.001$ & $0.013 \pm 0.002$ & $0.865 \pm 0.002$ & $0.000 \pm 0.000$ & $0.012 \pm 0.001$ & $0.721 \pm 0.002$ \\
1 & 0.5 & $0.003 \pm 0.001$ & $0.015 \pm 0.002$ & $0.865 \pm 0.001$ & $0.001 \pm 0.000$ & $0.022 \pm 0.001$ & $0.720 \pm 0.003$ \\
1 & 1.0 & $0.006 \pm 0.001$ & $0.024 \pm 0.002$ & $0.865 \pm 0.002$ & $0.006 \pm 0.001$ & $0.053 \pm 0.003$ & $0.722 \pm 0.002$ \\
1 & 2.0 & $0.020 \pm 0.004$ & $0.045 \pm 0.005$ & $0.868 \pm 0.001$ & $0.035 \pm 0.004$ & $0.130 \pm 0.006$ & $0.728 \pm 0.001$ \\
1 & 3.0 & $0.028 \pm 0.006$ & $0.051 \pm 0.007$ & $0.868 \pm 0.002$ & $0.042 \pm 0.008$ & $0.141 \pm 0.011$ & $0.728 \pm 0.002$ \\
\midrule
2 & 0.1 & $0.001 \pm 0.000$ & $0.008 \pm 0.001$ & $0.865 \pm 0.002$ & $0.000 \pm 0.000$ & $0.008 \pm 0.001$ & $0.720 \pm 0.002$ \\
2 & 0.3 & $0.002 \pm 0.000$ & $0.012 \pm 0.001$ & $0.864 \pm 0.002$ & $0.000 \pm 0.000$ & $0.015 \pm 0.001$ & $0.721 \pm 0.002$ \\
2 & 0.5 & $0.003 \pm 0.001$ & $0.017 \pm 0.002$ & $0.865 \pm 0.002$ & $0.002 \pm 0.000$ & $0.029 \pm 0.002$ & $0.721 \pm 0.002$ \\
2 & 1.0 & $0.013 \pm 0.002$ & $0.036 \pm 0.003$ & $0.867 \pm 0.002$ & $0.015 \pm 0.002$ & $0.085 \pm 0.004$ & $0.726 \pm 0.002$ \\
2 & 2.0 & $0.030 \pm 0.009$ & $0.053 \pm 0.007$ & $0.868 \pm 0.002$ & $0.048 \pm 0.007$ & $0.146 \pm 0.008$ & $0.729 \pm 0.002$ \\
2 & 3.0 & $0.037 \pm 0.007$ & $0.057 \pm 0.005$ & $0.869 \pm 0.002$ & $0.046 \pm 0.010$ & $0.144 \pm 0.011$ & $0.729 \pm 0.002$ \\
\midrule
4 & 0.1 & $0.001 \pm 0.000$ & $0.008 \pm 0.001$ & $0.864 \pm 0.003$ & $0.000 \pm 0.000$ & $0.009 \pm 0.001$ & $0.720 \pm 0.002$ \\
4 & 0.3 & $0.002 \pm 0.000$ & $0.013 \pm 0.002$ & $0.865 \pm 0.003$ & $0.001 \pm 0.000$ & $0.019 \pm 0.002$ & $0.720 \pm 0.002$ \\
4 & 0.5 & $0.004 \pm 0.001$ & $0.019 \pm 0.002$ & $0.865 \pm 0.002$ & $0.003 \pm 0.000$ & $0.036 \pm 0.002$ & $0.722 \pm 0.001$ \\
4 & 1.0 & $0.021 \pm 0.006$ & $0.045 \pm 0.005$ & $0.868 \pm 0.002$ & $0.036 \pm 0.007$ & $0.125 \pm 0.009$ & $0.729 \pm 0.001$ \\
4 & 2.0 & $0.032 \pm 0.006$ & $0.053 \pm 0.005$ & $0.868 \pm 0.002$ & $0.053 \pm 0.009$ & $0.151 \pm 0.008$ & $0.730 \pm 0.002$ \\
4 & 3.0 & $0.037 \pm 0.009$ & $0.058 \pm 0.006$ & $0.869 \pm 0.002$ & $0.053 \pm 0.008$ & $0.150 \pm 0.009$ & $0.728 \pm 0.001$ \\
\midrule
8 & 0.1 & $0.001 \pm 0.000$ & $0.008 \pm 0.002$ & $0.864 \pm 0.002$ & $0.000 \pm 0.000$ & $0.009 \pm 0.001$ & $0.720 \pm 0.003$ \\
8 & 0.3 & $0.002 \pm 0.000$ & $0.014 \pm 0.002$ & $0.865 \pm 0.002$ & $0.001 \pm 0.000$ & $0.023 \pm 0.002$ & $0.719 \pm 0.003$ \\
8 & 0.5 & $0.006 \pm 0.001$ & $0.024 \pm 0.002$ & $0.865 \pm 0.003$ & $0.004 \pm 0.001$ & $0.045 \pm 0.003$ & $0.722 \pm 0.002$ \\
8 & 1.0 & $0.021 \pm 0.006$ & $0.043 \pm 0.007$ & $0.867 \pm 0.002$ & $0.048 \pm 0.010$ & $0.144 \pm 0.011$ & $0.730 \pm 0.001$ \\
8 & 2.0 & $0.032 \pm 0.009$ & $0.053 \pm 0.009$ & $0.868 \pm 0.003$ & $0.063 \pm 0.016$ & $0.161 \pm 0.013$ & $0.730 \pm 0.001$ \\
8 & 3.0 & $0.038 \pm 0.007$ & $0.056 \pm 0.006$ & $0.869 \pm 0.002$ & $0.060 \pm 0.016$ & $0.154 \pm 0.012$ & $0.727 \pm 0.002$ \\
\midrule
16 & 0.1 & $0.001 \pm 0.000$ & $0.009 \pm 0.002$ & $0.864 \pm 0.002$ & $0.000 \pm 0.000$ & $0.010 \pm 0.001$ & $0.720 \pm 0.003$ \\
16 & 0.3 & $0.003 \pm 0.001$ & $0.017 \pm 0.002$ & $0.865 \pm 0.001$ & $0.002 \pm 0.000$ & $0.029 \pm 0.002$ & $0.721 \pm 0.002$ \\
16 & 0.5 & $0.010 \pm 0.003$ & $0.030 \pm 0.003$ & $0.867 \pm 0.002$ & $0.009 \pm 0.001$ & $0.067 \pm 0.007$ & $0.723 \pm 0.003$ \\
16 & 1.0 & $0.022 \pm 0.007$ & $0.044 \pm 0.006$ & $0.867 \pm 0.002$ & $0.063 \pm 0.016$ & $0.154 \pm 0.015$ & $0.730 \pm 0.002$ \\
16 & 2.0 & $0.030 \pm 0.006$ & $0.051 \pm 0.005$ & $0.868 \pm 0.002$ & $0.072 \pm 0.016$ & $0.165 \pm 0.014$ & $0.730 \pm 0.001$ \\
16 & 3.0 & $0.042 \pm 0.011$ & $0.059 \pm 0.006$ & $0.869 \pm 0.002$ & $0.069 \pm 0.015$ & $0.161 \pm 0.009$ & $0.726 \pm 0.001$ \\
\bottomrule
\end{tabular}
}
\end{table}
\begin{table}[h!]
\centering\small
\caption{Diversity grid: regression results (mean $\pm$ std over 15 seeds).}
\label{tab:diversity_grid_reg}
\begin{tabular}{cc|cc|cc}
\toprule
& & \multicolumn{2}{c|}{\textbf{California}} & \multicolumn{2}{c}{\textbf{Black-Friday}} \\
$r$ & $\sigma$ & Ambiguity $\uparrow$ & RMSE $\downarrow$ & Ambiguity $\uparrow$ & RMSE $\downarrow$ \\
\midrule
1 & 0.1 & $0.001 \pm 0.000$ & $0.472 \pm 0.007$ & $0.001 \pm 0.000$ & $0.692 \pm 0.001$ \\
1 & 0.3 & $0.001 \pm 0.000$ & $0.475 \pm 0.005$ & $0.001 \pm 0.000$ & $0.692 \pm 0.001$ \\
1 & 0.5 & $0.002 \pm 0.000$ & $0.471 \pm 0.004$ & $0.001 \pm 0.000$ & $0.692 \pm 0.001$ \\
1 & 1.0 & $0.012 \pm 0.002$ & $0.455 \pm 0.002$ & $0.005 \pm 0.001$ & $0.688 \pm 0.001$ \\
1 & 2.0 & $0.043 \pm 0.009$ & $0.446 \pm 0.003$ & $0.020 \pm 0.002$ & $0.684 \pm 0.001$ \\
1 & 3.0 & $0.044 \pm 0.007$ & $0.443 \pm 0.004$ & $0.024 \pm 0.003$ & $0.683 \pm 0.000$ \\
\midrule
2 & 0.1 & $0.001 \pm 0.000$ & $0.472 \pm 0.005$ & $0.001 \pm 0.000$ & $0.691 \pm 0.001$ \\
2 & 0.3 & $0.001 \pm 0.000$ & $0.472 \pm 0.004$ & $0.001 \pm 0.000$ & $0.692 \pm 0.001$ \\
2 & 0.5 & $0.002 \pm 0.000$ & $0.468 \pm 0.004$ & $0.001 \pm 0.000$ & $0.691 \pm 0.001$ \\
2 & 1.0 & $0.029 \pm 0.006$ & $0.450 \pm 0.003$ & $0.014 \pm 0.002$ & $0.685 \pm 0.001$ \\
2 & 2.0 & $0.046 \pm 0.006$ & $0.446 \pm 0.002$ & $0.020 \pm 0.002$ & $0.684 \pm 0.000$ \\
2 & 3.0 & $0.045 \pm 0.009$ & $0.440 \pm 0.002$ & $0.022 \pm 0.004$ & $0.683 \pm 0.001$ \\
\midrule
4 & 0.1 & $0.001 \pm 0.000$ & $0.473 \pm 0.006$ & $0.001 \pm 0.000$ & $0.691 \pm 0.001$ \\
4 & 0.3 & $0.001 \pm 0.000$ & $0.473 \pm 0.005$ & $0.001 \pm 0.000$ & $0.691 \pm 0.002$ \\
4 & 0.5 & $0.004 \pm 0.001$ & $0.467 \pm 0.004$ & $0.002 \pm 0.000$ & $0.690 \pm 0.001$ \\
4 & 1.0 & $0.033 \pm 0.009$ & $0.448 \pm 0.003$ & $0.017 \pm 0.002$ & $0.685 \pm 0.001$ \\
4 & 2.0 & $0.046 \pm 0.005$ & $0.445 \pm 0.003$ & $0.019 \pm 0.002$ & $0.683 \pm 0.001$ \\
4 & 3.0 & $0.041 \pm 0.006$ & $0.438 \pm 0.002$ & $0.015 \pm 0.002$ & $0.682 \pm 0.001$ \\
\midrule
8 & 0.1 & $0.001 \pm 0.000$ & $0.474 \pm 0.005$ & $0.001 \pm 0.000$ & $0.690 \pm 0.001$ \\
8 & 0.3 & $0.002 \pm 0.000$ & $0.470 \pm 0.003$ & $0.001 \pm 0.000$ & $0.691 \pm 0.002$ \\
8 & 0.5 & $0.008 \pm 0.003$ & $0.458 \pm 0.003$ & $0.004 \pm 0.001$ & $0.688 \pm 0.001$ \\
8 & 1.0 & $0.036 \pm 0.006$ & $0.450 \pm 0.003$ & $0.017 \pm 0.003$ & $0.685 \pm 0.001$ \\
8 & 2.0 & $0.045 \pm 0.008$ & $0.443 \pm 0.003$ & $0.020 \pm 0.003$ & $0.683 \pm 0.001$ \\
8 & 3.0 & $0.037 \pm 0.005$ & $0.439 \pm 0.003$ & $0.012 \pm 0.002$ & $0.683 \pm 0.001$ \\
\midrule
16 & 0.1 & $0.001 \pm 0.000$ & $0.474 \pm 0.004$ & $0.001 \pm 0.000$ & $0.690 \pm 0.001$ \\
16 & 0.3 & $0.003 \pm 0.001$ & $0.472 \pm 0.002$ & $0.002 \pm 0.000$ & $0.690 \pm 0.001$ \\
16 & 0.5 & $0.022 \pm 0.008$ & $0.452 \pm 0.003$ & $0.010 \pm 0.002$ & $0.685 \pm 0.001$ \\
16 & 1.0 & $0.034 \pm 0.008$ & $0.450 \pm 0.004$ & $0.016 \pm 0.003$ & $0.685 \pm 0.001$ \\
16 & 2.0 & $0.041 \pm 0.007$ & $0.441 \pm 0.003$ & $0.016 \pm 0.002$ & $0.683 \pm 0.001$ \\
16 & 3.0 & $0.033 \pm 0.004$ & $0.441 \pm 0.007$ & $0.036 \pm 0.052$ & $0.693 \pm 0.013$ \\
\bottomrule
\end{tabular}
\end{table}

\clearpage

\section{Selected Hyperparameters and Parameter Counts}
\label{app:n_parameters}

\paragraph{Expressivity without parameter overhead.} 

As established in Proposition~\ref{prop:expressivity}, LoMETab possesses a strictly larger hypothesis class than TabM; however, this increased expressivity does not necessarily entail a proportional increase in the number of parameters. While LoMETab introduces additional parameters through member-specific low-rank multiplicative factors, this increase is structurally constrained. Specifically, the adapter parameters scale as $Kr (d_{\text{in}} + d_{\text{out}})$, which is significantly smaller than the size of the full weight matrices and depends on the rank $r$ rather than the full model dimensionality. As a result, the additional capacity does not translate into a proportional growth in total parameter count.

Tab.~\ref{tab:hp_selection_dist} summarizes the distribution of hyperparameters across all 77 datasets, where $K$ denotes the number of submodels, $r$ the adapter rank, and $\sigma_{\text{init}}$ the initialization scale. The results show that larger ranks are frequently selected, while the ensemble size remains balanced between $K=16$ and $K=32$. This suggests that the additional expressivity provided by multiplicative low-rank adapters is consistently utilized across diverse datasets.

Tab.~\ref{tab:nparams_lometab_tabm} further provides a detailed comparison of the selected architectures and resulting parameter counts, where $d$ denotes the hidden dimension and $L$ the number of LoMETab linear layers. Although LoMETab can in principle utilize more parameters than TabM due to its extended hypothesis class, empirical results show that this is not always the case. In practice, hyperparameter optimization naturally adjusts other architectural factors—such as $d$ and $L$—often leading to more compact configurations.

Consequently, LoMETab does not consistently incur a higher parameter cost; instead, it frequently achieves comparable or even lower parameter counts than TabM across several datasets. This indicates that the additional expressivity introduced by the LoMETab formulation can be realized without a proportional increase in model size, highlighting a favorable trade-off between capacity and efficiency.

\begin{table}[t]
\centering\small
\setlength{\tabcolsep}{5pt}
\caption{Distribution of $r$ and $\sigma_{\mathrm{init}}$ across all 77 datasets, broken down by $K$.}
\begin{subfigure}[t]{0.48\linewidth}
    \centering
    \caption{Selected adapter rank $r$}
    \begin{tabular}{l|ccccc|c}
        \toprule
        $K$ & $1$ & $2$ & $4$ & $8$ & $16$ & \textbf{All} \\
        \midrule
        $K=16$ & 5 & 3 & 6 & 10 & 12 & 36 \\
        $K=32$ & 4 & 4 & 9 & 9 & 15 & 41 \\
        \midrule
        All & 9 & 7 & 15 & 19 & 27 & 77 \\
        \bottomrule
    \end{tabular}
\end{subfigure}\hfill
\begin{subfigure}[t]{0.48\linewidth}
    \centering
    \caption{Selected initialization scale $\sigma_{\mathrm{init}}$}
    \begin{tabular}{l|cccc|c}
        \toprule
        $K$ & $0.1$ & $0.3$ & $0.5$ & $1.0$ & \textbf{All} \\
        \midrule
        $K=16$ & 2 & 5 & 6 & 23 & 36 \\
        $K=32$ & 1 & 10 & 7 & 23 & 41 \\
        \midrule
        All & 3 & 15 & 13 & 46 & 77 \\
        \bottomrule
    \end{tabular}
\end{subfigure}
\label{tab:hp_selection_dist}
\end{table}

\begin{longtable}{l l l ccccc l}
\caption{Parameter Comparison between LoMETab and TabM} \label{tab:nparams_lometab_tabm} \\
\toprule
Dataset & Fold & Model & $K$ & $r$ & $d$ & $L$ & \#params & Saving \\
\midrule
\endfirsthead
\caption*{Parameter Comparison between LoMETab and TabM (continued)} \\
\toprule
Dataset & Fold & Model & $K$ & $r$ & $d$ & $L$ & \#params & Saving \\
\midrule
\endhead
\midrule
\multicolumn{9}{r}{\textit{Continued on next page}} \\
\endfoot
\bottomrule
\endlastfoot
\midrule
\multirow{2}{*}{AD} & \multirow{2}{*}{-} & LoMETab & 32 & 16 & 272 & 1 & $409.5$K & \multirow{2}{*}{$+14.4$\%} \\
& & TabM & 32 & -- & 320 & 4 & $478.4$K & \\
\midrule
\multirow{2}{*}{Ailerons} & \multirow{2}{*}{0} & LoMETab & 32 & 16 & 64 & 3 & $510.3$K & \multirow{2}{*}{$+66.9$\%} \\
& & TabM & 32 & -- & 560 & 5 & $1.54$M & \\
\midrule
\multirow{2}{*}{Ailerons} & \multirow{2}{*}{1} & LoMETab & 32 & 4 & 128 & 2 & $157.1$K & \multirow{2}{*}{$-516.2$\%} \\
& & TabM & 32 & -- & 80 & 2 & $25.5$K & \\
\midrule
\multirow{2}{*}{Ailerons} & \multirow{2}{*}{2} & LoMETab & 32 & 16 & 64 & 3 & $533.5$K & \multirow{2}{*}{$+52.2$\%} \\
& & TabM & 32 & -- & 544 & 4 & $1.12$M & \\
\midrule
\multirow{2}{*}{BL} & \multirow{2}{*}{-} & LoMETab & 32 & 16 & 912 & 2 & $2.39$M & \multirow{2}{*}{$-88.4$\%} \\
& & TabM & 32 & -- & 1024 & 2 & $1.27$M & \\
\midrule
\multirow{2}{*}{Brazilian\_hous…} & \multirow{2}{*}{0} & LoMETab & 32 & 1 & 704 & 1 & $242.8$K & \multirow{2}{*}{$-115.5$\%} \\
& & TabM & 32 & -- & 992 & 1 & $112.7$K & \\
\midrule
\multirow{2}{*}{Brazilian\_hous…} & \multirow{2}{*}{1} & LoMETab & 32 & 4 & 992 & 1 & $310.5$K & \multirow{2}{*}{$-82.8$\%} \\
& & TabM & 32 & -- & 320 & 2 & $169.9$K & \\
\midrule
\multirow{2}{*}{Brazilian\_hous…} & \multirow{2}{*}{2} & LoMETab & 16 & 2 & 480 & 3 & $692.0$K & \multirow{2}{*}{$-2,573.0$\%} \\
& & TabM & 32 & -- & 224 & 1 & $25.9$K & \\
\midrule
\multirow{2}{*}{CA} & \multirow{2}{*}{-} & LoMETab & 32 & 16 & 416 & 2 & $1.02$M & \multirow{2}{*}{$-132.5$\%} \\
& & TabM & 32 & -- & 400 & 3 & $438.7$K & \\
\midrule
\multirow{2}{*}{CH} & \multirow{2}{*}{-} & LoMETab & 32 & 4 & 512 & 1 & $217.3$K & \multirow{2}{*}{$-11.9$\%} \\
& & TabM & 32 & -- & 192 & 4 & $194.1$K & \\
\midrule
\multirow{2}{*}{CO} & \multirow{2}{*}{-} & LoMETab & 16 & 16 & 944 & 3 & $3.35$M & \multirow{2}{*}{$+23.7$\%} \\
& & TabM & 32 & -- & 960 & 5 & $4.39$M & \\
\midrule
\multirow{2}{*}{DI} & \multirow{2}{*}{-} & LoMETab & 16 & 8 & 160 & 4 & $305.4$K & \multirow{2}{*}{$+40.1$\%} \\
& & TabM & 32 & -- & 432 & 3 & $509.8$K & \\
\midrule
\multirow{2}{*}{HI} & \multirow{2}{*}{-} & LoMETab & 32 & 16 & 80 & 3 & $526.2$K & \multirow{2}{*}{$+83.0$\%} \\
& & TabM & 32 & -- & 944 & 4 & $3.09$M & \\
\midrule
\multirow{2}{*}{KDDCup09\_upsel…} & \multirow{2}{*}{0} & LoMETab & 16 & 8 & 464 & 3 & $1.06$M & \multirow{2}{*}{$-19.8$\%} \\
& & TabM & 32 & -- & 464 & 4 & $889.1$K & \\
\midrule
\multirow{2}{*}{KDDCup09\_upsel…} & \multirow{2}{*}{1} & LoMETab & 16 & 4 & 128 & 2 & $224.1$K & \multirow{2}{*}{$+86.7$\%} \\
& & TabM & 32 & -- & 576 & 5 & $1.69$M & \\
\midrule
\multirow{2}{*}{KDDCup09\_upsel…} & \multirow{2}{*}{2} & LoMETab & 16 & 1 & 528 & 2 & $811.5$K & \multirow{2}{*}{$-583.1$\%} \\
& & TabM & 32 & -- & 128 & 4 & $118.8$K & \\
\midrule
\multirow{2}{*}{MI} & \multirow{2}{*}{-} & LoMETab & 16 & 16 & 224 & 4 & $1.33$M & \multirow{2}{*}{$+12.5$\%} \\
& & TabM & 32 & -- & 544 & 5 & $1.53$M & \\
\midrule
\multirow{2}{*}{MagicTelescope} & \multirow{2}{*}{0} & LoMETab & 32 & 8 & 272 & 4 & $821.7$K & \multirow{2}{*}{$+58.4$\%} \\
& & TabM & 32 & -- & 640 & 5 & $1.97$M & \\
\midrule
\multirow{2}{*}{MagicTelescope} & \multirow{2}{*}{1} & LoMETab & 16 & 16 & 480 & 1 & $289.9$K & \multirow{2}{*}{$+69.5$\%} \\
& & TabM & 32 & -- & 864 & 2 & $949.1$K & \\
\midrule
\multirow{2}{*}{MagicTelescope} & \multirow{2}{*}{2} & LoMETab & 16 & 8 & 544 & 4 & $1.48$M & \multirow{2}{*}{$+4.6$\%} \\
& & TabM & 32 & -- & 560 & 5 & $1.55$M & \\
\midrule
\multirow{2}{*}{Mercedes\_Benz} & \multirow{2}{*}{0} & LoMETab & 32 & 4 & 368 & 4 & $982.9$K & \multirow{2}{*}{$+38.1$\%} \\
& & TabM & 32 & -- & 560 & 4 & $1.59$M & \\
\midrule
\multirow{2}{*}{Mercedes\_Benz} & \multirow{2}{*}{1} & LoMETab & 32 & 8 & 480 & 4 & $1.91$M & \multirow{2}{*}{$-20.2$\%} \\
& & TabM & 32 & -- & 560 & 4 & $1.59$M & \\
\midrule
\multirow{2}{*}{Mercedes\_Benz} & \multirow{2}{*}{2} & LoMETab & 16 & 8 & 64 & 4 & $147.5$K & \multirow{2}{*}{$+95.7$\%} \\
& & TabM & 32 & -- & 784 & 5 & $3.43$M & \\
\midrule
\multirow{2}{*}{Mercedes\_Benz} & \multirow{2}{*}{3} & LoMETab & 32 & 8 & 816 & 2 & $1.78$M & \multirow{2}{*}{$+5.5$\%} \\
& & TabM & 32 & -- & 624 & 4 & $1.88$M & \\
\midrule
\multirow{2}{*}{Mercedes\_Benz} & \multirow{2}{*}{4} & LoMETab & 32 & 16 & 608 & 3 & $2.80$M & \multirow{2}{*}{$-42.5$\%} \\
& & TabM & 32 & -- & 640 & 4 & $1.96$M & \\
\midrule
\multirow{2}{*}{MiamiHousing201…} & \multirow{2}{*}{0} & LoMETab & 16 & 16 & 64 & 4 & $290.5$K & \multirow{2}{*}{$+88.5$\%} \\
& & TabM & 32 & -- & 736 & 5 & $2.53$M & \\
\midrule
\multirow{2}{*}{MiamiHousing201…} & \multirow{2}{*}{1} & LoMETab & 16 & 2 & 192 & 4 & $226.8$K & \multirow{2}{*}{$+31.4$\%} \\
& & TabM & 32 & -- & 272 & 4 & $330.4$K & \\
\midrule
\multirow{2}{*}{MiamiHousing201…} & \multirow{2}{*}{2} & LoMETab & 32 & 2 & 272 & 3 & $385.6$K & \multirow{2}{*}{$-192.2$\%} \\
& & TabM & 32 & -- & 192 & 3 & $132.0$K & \\
\midrule
\multirow{2}{*}{MiniBooNE} & \multirow{2}{*}{0} & LoMETab & 16 & 8 & 160 & 3 & $533.5$K & \multirow{2}{*}{$+84.8$\%} \\
& & TabM & 32 & -- & 1008 & 4 & $3.52$M & \\
\midrule
\multirow{2}{*}{OT} & \multirow{2}{*}{-} & LoMETab & 32 & 16 & 560 & 2 & $4.19$M & \multirow{2}{*}{$-65.8$\%} \\
& & TabM & 32 & -- & 976 & 3 & $2.53$M & \\
\midrule
\multirow{2}{*}{OnlineNewsPopul…} & \multirow{2}{*}{0} & LoMETab & 16 & 16 & 64 & 2 & $309.2$K & \multirow{2}{*}{$-76.3$\%} \\
& & TabM & 32 & -- & 304 & 2 & $175.3$K & \\
\midrule
\multirow{2}{*}{analcatdata\_su…} & \multirow{2}{*}{0} & LoMETab & 32 & 1 & 928 & 4 & $2.97$M & \multirow{2}{*}{$-1,825.1$\%} \\
& & TabM & 32 & -- & 144 & 5 & $154.2$K & \\
\midrule
\multirow{2}{*}{analcatdata\_su…} & \multirow{2}{*}{1} & LoMETab & 16 & 2 & 144 & 4 & $111.8$K & \multirow{2}{*}{$-71.0$\%} \\
& & TabM & 32 & -- & 80 & 5 & $65.4$K & \\
\midrule
\multirow{2}{*}{analcatdata\_su…} & \multirow{2}{*}{2} & LoMETab & 32 & 8 & 912 & 1 & $375.0$K & \multirow{2}{*}{$-467.5$\%} \\
& & TabM & 32 & -- & 96 & 4 & $66.1$K & \\
\midrule
\multirow{2}{*}{analcatdata\_su…} & \multirow{2}{*}{3} & LoMETab & 32 & 16 & 384 & 3 & $1.38$M & \multirow{2}{*}{$+65.8$\%} \\
& & TabM & 32 & -- & 944 & 5 & $4.03$M & \\
\midrule
\multirow{2}{*}{analcatdata\_su…} & \multirow{2}{*}{4} & LoMETab & 16 & 8 & 1008 & 1 & $204.5$K & \multirow{2}{*}{$+93.8$\%} \\
& & TabM & 32 & -- & 848 & 5 & $3.29$M & \\
\midrule
\multirow{2}{*}{bank-marketing} & \multirow{2}{*}{0} & LoMETab & 16 & 1 & 848 & 3 & $1.78$M & \multirow{2}{*}{$-125.3$\%} \\
& & TabM & 32 & -- & 384 & 5 & $789.4$K & \\
\midrule
\multirow{2}{*}{bank-marketing} & \multirow{2}{*}{1} & LoMETab & 32 & 8 & 576 & 1 & $317.9$K & \multirow{2}{*}{$+80.5$\%} \\
& & TabM & 32 & -- & 576 & 5 & $1.63$M & \\
\midrule
\multirow{2}{*}{bank-marketing} & \multirow{2}{*}{2} & LoMETab & 32 & 1 & 688 & 3 & $1.26$M & \multirow{2}{*}{$+9.4$\%} \\
& & TabM & 32 & -- & 528 & 5 & $1.39$M & \\
\midrule
\multirow{2}{*}{cpu\_act} & \multirow{2}{*}{0} & LoMETab & 32 & 16 & 112 & 4 & $897.4$K & \multirow{2}{*}{$+4.5$\%} \\
& & TabM & 32 & -- & 496 & 4 & $939.6$K & \\
\midrule
\multirow{2}{*}{cpu\_act} & \multirow{2}{*}{1} & LoMETab & 32 & 16 & 80 & 3 & $461.7$K & \multirow{2}{*}{$+81.0$\%} \\
& & TabM & 32 & -- & 720 & 5 & $2.44$M & \\
\midrule
\multirow{2}{*}{cpu\_act} & \multirow{2}{*}{2} & LoMETab & 16 & 16 & 176 & 3 & $430.6$K & \multirow{2}{*}{$+67.0$\%} \\
& & TabM & 32 & -- & 512 & 5 & $1.31$M & \\
\midrule
\multirow{2}{*}{credit} & \multirow{2}{*}{0} & LoMETab & 16 & 4 & 448 & 3 & $699.9$K & \multirow{2}{*}{$-3,497.5$\%} \\
& & TabM & 32 & -- & 64 & 2 & $19.5$K & \\
\midrule
\multirow{2}{*}{credit} & \multirow{2}{*}{1} & LoMETab & 16 & 4 & 544 & 4 & $1.38$M & \multirow{2}{*}{$+33.1$\%} \\
& & TabM & 32 & -- & 656 & 5 & $2.06$M & \\
\midrule
\multirow{2}{*}{elevators} & \multirow{2}{*}{0} & LoMETab & 32 & 4 & 208 & 3 & $441.4$K & \multirow{2}{*}{$-315.3$\%} \\
& & TabM & 32 & -- & 112 & 5 & $106.3$K & \\
\midrule
\multirow{2}{*}{elevators} & \multirow{2}{*}{1} & LoMETab & 16 & 1 & 768 & 4 & $2.13$M & \multirow{2}{*}{$-217.8$\%} \\
& & TabM & 32 & -- & 352 & 5 & $670.8$K & \\
\midrule
\multirow{2}{*}{fifa} & \multirow{2}{*}{0} & LoMETab & 32 & 2 & 272 & 2 & $186.3$K & \multirow{2}{*}{$-148.8$\%} \\
& & TabM & 32 & -- & 192 & 2 & $74.9$K & \\
\midrule
\multirow{2}{*}{fifa} & \multirow{2}{*}{1} & LoMETab & 32 & 8 & 112 & 2 & $173.0$K & \multirow{2}{*}{$+22.5$\%} \\
& & TabM & 32 & -- & 384 & 2 & $223.3$K & \\
\midrule
\multirow{2}{*}{house\_sales} & \multirow{2}{*}{0} & LoMETab & 32 & 4 & 64 & 2 & $188.4$K & \multirow{2}{*}{$-86.6$\%} \\
& & TabM & 32 & -- & 160 & 3 & $101.0$K & \\
\midrule
\multirow{2}{*}{isolet} & \multirow{2}{*}{0} & LoMETab & 32 & 1 & 544 & 4 & $8.69$M & \multirow{2}{*}{$-119.9$\%} \\
& & TabM & 32 & -- & 864 & 5 & $3.95$M & \\
\midrule
\multirow{2}{*}{isolet} & \multirow{2}{*}{1} & LoMETab & 16 & 1 & 752 & 3 & $17.48$M & \multirow{2}{*}{$-278.7$\%} \\
& & TabM & 32 & -- & 944 & 5 & $4.62$M & \\
\midrule
\multirow{2}{*}{isolet} & \multirow{2}{*}{2} & LoMETab & 16 & 16 & 512 & 3 & $11.28$M & \multirow{2}{*}{$-137.2$\%} \\
& & TabM & 32 & -- & 960 & 5 & $4.76$M & \\
\midrule
\multirow{2}{*}{jannis} & \multirow{2}{*}{0} & LoMETab & 16 & 16 & 176 & 4 & $802.6$K & \multirow{2}{*}{$+40.1$\%} \\
& & TabM & 32 & -- & 512 & 5 & $1.34$M & \\
\midrule
\multirow{2}{*}{kdd\_ipums\_la} & \multirow{2}{*}{0} & LoMETab & 32 & 4 & 624 & 2 & $953.2$K & \multirow{2}{*}{$-11.7$\%} \\
& & TabM & 32 & -- & 400 & 5 & $853.5$K & \\
\midrule
\multirow{2}{*}{kdd\_ipums\_la} & \multirow{2}{*}{1} & LoMETab & 16 & 4 & 320 & 4 & $555.4$K & \multirow{2}{*}{$-95.3$\%} \\
& & TabM & 32 & -- & 208 & 5 & $284.4$K & \\
\midrule
\multirow{2}{*}{kdd\_ipums\_la} & \multirow{2}{*}{2} & LoMETab & 32 & 8 & 288 & 1 & $379.7$K & \multirow{2}{*}{$+35.5$\%} \\
& & TabM & 32 & -- & 464 & 3 & $589.1$K & \\
\midrule
\multirow{2}{*}{medical\_charge…} & \multirow{2}{*}{0} & LoMETab & 16 & 16 & 544 & 1 & $176.6$K & \multirow{2}{*}{$+62.5$\%} \\
& & TabM & 32 & -- & 288 & 5 & $471.0$K & \\
\midrule
\multirow{2}{*}{nyc-taxi-green-…} & \multirow{2}{*}{0} & LoMETab & 32 & 16 & 608 & 4 & $3.50$M & \multirow{2}{*}{$-5,104.2$\%} \\
& & TabM & 32 & -- & 80 & 5 & $67.2$K & \\
\midrule
\multirow{2}{*}{particulate-mat…} & \multirow{2}{*}{0} & LoMETab & 32 & 16 & 352 & 2 & $762.0$K & \multirow{2}{*}{$-663.0$\%} \\
& & TabM & 32 & -- & 224 & 2 & $99.9$K & \\
\midrule
\multirow{2}{*}{phoneme} & \multirow{2}{*}{0} & LoMETab & 32 & 2 & 800 & 1 & $201.8$K & \multirow{2}{*}{$+84.7$\%} \\
& & TabM & 32 & -- & 736 & 3 & $1.32$M & \\
\midrule
\multirow{2}{*}{phoneme} & \multirow{2}{*}{1} & LoMETab & 32 & 4 & 752 & 3 & $1.74$M & \multirow{2}{*}{$-2,339.7$\%} \\
& & TabM & 32 & -- & 176 & 2 & $71.5$K & \\
\midrule
\multirow{2}{*}{phoneme} & \multirow{2}{*}{2} & LoMETab & 16 & 16 & 848 & 3 & $2.67$M & \multirow{2}{*}{$-215.1$\%} \\
& & TabM & 32 & -- & 400 & 5 & $847.0$K & \\
\midrule
\multirow{2}{*}{phoneme} & \multirow{2}{*}{3} & LoMETab & 16 & 4 & 448 & 2 & $406.0$K & \multirow{2}{*}{$+91.1$\%} \\
& & TabM & 32 & -- & 1008 & 5 & $4.59$M & \\
\midrule
\multirow{2}{*}{phoneme} & \multirow{2}{*}{4} & LoMETab & 16 & 4 & 400 & 4 & $724.1$K & \multirow{2}{*}{$+75.7$\%} \\
& & TabM & 32 & -- & 928 & 4 & $2.97$M & \\
\midrule
\multirow{2}{*}{pol} & \multirow{2}{*}{0} & LoMETab & 32 & 8 & 416 & 4 & $1.89$M & \multirow{2}{*}{$+46.7$\%} \\
& & TabM & 32 & -- & 880 & 5 & $3.54$M & \\
\midrule
\multirow{2}{*}{pol} & \multirow{2}{*}{1} & LoMETab & 16 & 8 & 176 & 4 & $535.5$K & \multirow{2}{*}{$-5.1$\%} \\
& & TabM & 32 & -- & 432 & 3 & $509.8$K & \\
\midrule
\multirow{2}{*}{road-safety} & \multirow{2}{*}{0} & LoMETab & 16 & 8 & 896 & 3 & $2.78$M & \multirow{2}{*}{$-630.6$\%} \\
& & TabM & 32 & -- & 288 & 4 & $379.9$K & \\
\midrule
\multirow{2}{*}{superconduct} & \multirow{2}{*}{0} & LoMETab & 16 & 16 & 592 & 4 & $4.29$M & \multirow{2}{*}{$-27.8$\%} \\
& & TabM & 32 & -- & 848 & 5 & $3.35$M & \\
\midrule
\multirow{2}{*}{wine} & \multirow{2}{*}{0} & LoMETab & 32 & 2 & 464 & 3 & $810.4$K & \multirow{2}{*}{$+9.6$\%} \\
& & TabM & 32 & -- & 480 & 4 & $896.6$K & \\
\midrule
\multirow{2}{*}{wine} & \multirow{2}{*}{1} & LoMETab & 32 & 4 & 576 & 1 & $204.1$K & \multirow{2}{*}{$+70.7$\%} \\
& & TabM & 32 & -- & 416 & 4 & $697.2$K & \\
\midrule
\multirow{2}{*}{wine} & \multirow{2}{*}{2} & LoMETab & 16 & 8 & 1008 & 1 & $494.7$K & \multirow{2}{*}{$-61.5$\%} \\
& & TabM & 32 & -- & 256 & 4 & $306.3$K & \\
\midrule
\multirow{2}{*}{wine} & \multirow{2}{*}{3} & LoMETab & 16 & 1 & 1008 & 3 & $2.41$M & \multirow{2}{*}{$+9.2$\%} \\
& & TabM & 32 & -- & 752 & 5 & $2.66$M & \\
\midrule
\multirow{2}{*}{wine} & \multirow{2}{*}{4} & LoMETab & 16 & 8 & 912 & 4 & $3.52$M & \multirow{2}{*}{$-296.8$\%} \\
& & TabM & 32 & -- & 832 & 2 & $888.2$K & \\
\midrule
\multirow{2}{*}{wine\_quality} & \multirow{2}{*}{0} & LoMETab & 16 & 16 & 928 & 4 & $4.72$M & \multirow{2}{*}{$-346.8$\%} \\
& & TabM & 32 & -- & 656 & 3 & $1.06$M & \\
\midrule
\multirow{2}{*}{wine\_quality} & \multirow{2}{*}{1} & LoMETab & 32 & 8 & 608 & 3 & $1.92$M & \multirow{2}{*}{$+56.6$\%} \\
& & TabM & 32 & -- & 992 & 5 & $4.42$M & \\
\midrule
\multirow{2}{*}{wine\_quality} & \multirow{2}{*}{2} & LoMETab & 32 & 16 & 848 & 4 & $5.76$M & \multirow{2}{*}{$-194.8$\%} \\
& & TabM & 32 & -- & 640 & 5 & $1.95$M & \\
\midrule
\multirow{2}{*}{year} & \multirow{2}{*}{0} & LoMETab & 32 & 16 & 144 & 4 & $1.79$M & \multirow{2}{*}{$-303.9$\%} \\
& & TabM & 32 & -- & 384 & 3 & $443.0$K & \\
\end{longtable}

\section{Full Ablation Sweep Results}
\label{app:per-dataset-ablations}

Tables~\ref{tab:appendix_clf_acc}, \ref{tab:appendix_reg}, and \ref{tab:appendix_clf_ece} report the full per-dataset results of the hyperparameter sweep described in Sec.~\ref{sec:exp:design_axes}. For each dataset, we vary the rank $r \in \{1,2,4,8,16\}$ (with $K{=}16$, $\sigma_{\mathrm{init}}{=}1.0$ fixed) and the initialization scale $\sigma_{\mathrm{init}} \in \{0.1, 0.3, 0.5, 1.0, 2.0\}$ (with $K{=}16$, $r{=}16$ fixed), and compare against TabM with $K \in \{16, 32, 64, 128\}$ using its default hyperparameters. Tab.~\ref{tab:appendix_clf_acc} and Tab.~\ref{tab:appendix_reg} report accuracy and RMSE respectively, while Tab.~\ref{tab:appendix_clf_ece} reports the Expected Calibration Error (ECE) for classification datasets.
\begin{table}[htbp]
\centering\small
\caption{Classification accuracy (ACC\,$\uparrow$, 15 seeds, mean\,$\pm$\,std).}
\label{tab:appendix_clf_acc}
\resizebox{\linewidth}{!}{%
\begin{tabular}{ll|r|r|r|r|r}
\toprule
Model & Param & \textbf{HI} & \textbf{OT} & \textbf{CO} & \textbf{AD} & \textbf{CH} \\
\midrule
\multirow{5}{*}{\shortstack[l]{LoMETab \\ \scriptsize($K{=}16,\,\sigma{=}1.0$)}} & $r{=1}$ & 0.724{\scriptsize$\pm$0.001} & 0.803{\scriptsize$\pm$0.002} & \textbf{0.958{\scriptsize$\pm$0.001}} & 0.859{\scriptsize$\pm$0.002} & 0.859{\scriptsize$\pm$0.003} \\
 & $r{=2}$ & 0.728{\scriptsize$\pm$0.002} & 0.812{\scriptsize$\pm$0.002} & 0.932{\scriptsize$\pm$0.001} & 0.860{\scriptsize$\pm$0.001} & 0.860{\scriptsize$\pm$0.003} \\
 & $r{=4}$ & 0.730{\scriptsize$\pm$0.001} & 0.823{\scriptsize$\pm$0.002} & 0.939{\scriptsize$\pm$0.002} & 0.859{\scriptsize$\pm$0.001} & 0.860{\scriptsize$\pm$0.003} \\
 & $r{=8}$ & 0.730{\scriptsize$\pm$0.002} & 0.824{\scriptsize$\pm$0.002} & 0.948{\scriptsize$\pm$0.001} & 0.859{\scriptsize$\pm$0.001} & 0.860{\scriptsize$\pm$0.002} \\
 & $r{=16}$ & \textbf{0.730{\scriptsize$\pm$0.003}} & \textbf{0.825{\scriptsize$\pm$0.002}} & 0.956{\scriptsize$\pm$0.001} & 0.859{\scriptsize$\pm$0.002} & 0.859{\scriptsize$\pm$0.003} \\
\midrule
\multirow{5}{*}{\shortstack[l]{LoMETab \\ \scriptsize($K{=}16,\,r{=}16$)}} & $\sigma{=0.1}$ & 0.721{\scriptsize$\pm$0.003} & 0.800{\scriptsize$\pm$0.004} & 0.956{\scriptsize$\pm$0.001} & 0.859{\scriptsize$\pm$0.001} & 0.860{\scriptsize$\pm$0.002} \\
 & $\sigma{=0.3}$ & 0.722{\scriptsize$\pm$0.003} & 0.800{\scriptsize$\pm$0.002} & 0.957{\scriptsize$\pm$0.001} & 0.859{\scriptsize$\pm$0.002} & 0.858{\scriptsize$\pm$0.003} \\
 & $\sigma{=0.5}$ & 0.728{\scriptsize$\pm$0.002} & 0.806{\scriptsize$\pm$0.002} & 0.950{\scriptsize$\pm$0.024} & 0.859{\scriptsize$\pm$0.001} & \textbf{0.861{\scriptsize$\pm$0.002}} \\
 & $\sigma{=1.0}$ & \textbf{0.730{\scriptsize$\pm$0.003}} & \textbf{0.825{\scriptsize$\pm$0.002}} & 0.956{\scriptsize$\pm$0.001} & 0.859{\scriptsize$\pm$0.002} & 0.859{\scriptsize$\pm$0.003} \\
 & $\sigma{=2.0}$ & 0.728{\scriptsize$\pm$0.002} & 0.825{\scriptsize$\pm$0.002} & 0.955{\scriptsize$\pm$0.002} & 0.859{\scriptsize$\pm$0.001} & 0.857{\scriptsize$\pm$0.003} \\
\midrule
\multirow{4}{*}{TabM} & $K{=16}$ & 0.729{\scriptsize$\pm$0.001} & 0.822{\scriptsize$\pm$0.001} & 0.937{\scriptsize$\pm$0.006} & 0.861{\scriptsize$\pm$0.001} & 0.845{\scriptsize$\pm$0.002} \\
 & $K{=32}$ & 0.729{\scriptsize$\pm$0.002} & 0.822{\scriptsize$\pm$0.002} & 0.933{\scriptsize$\pm$0.006} & \textbf{0.861{\scriptsize$\pm$0.002}} & 0.846{\scriptsize$\pm$0.002} \\
 & $K{=64}$ & 0.729{\scriptsize$\pm$0.001} & 0.820{\scriptsize$\pm$0.003} & 0.865{\scriptsize$\pm$0.087} & 0.854{\scriptsize$\pm$0.002} & 0.834{\scriptsize$\pm$0.013} \\
 & $K{=128}$ & 0.729{\scriptsize$\pm$0.001} & 0.816{\scriptsize$\pm$0.003} & 0.874{\scriptsize$\pm$0.086} & 0.860{\scriptsize$\pm$0.007} & 0.841{\scriptsize$\pm$0.012} \\
\bottomrule
\end{tabular}}
\end{table}
\begin{table}[htbp]
\centering\small
\caption{Regression RMSE ($\downarrow$, 15 seeds, mean\,$\pm$\,std).}
\label{tab:appendix_reg}
\resizebox{\linewidth}{!}{%
\begin{tabular}{ll|r|r|r|r}
\toprule
Model & Param & \textbf{BL} & \textbf{CA} & \textbf{DI} & \textbf{MI} \\
\midrule
\multirow{5}{*}{\shortstack[l]{LoMETab \\ \scriptsize($K{=}16,\,\sigma{=}1.0$)}} & $r{=1}$ & 0.6895{\scriptsize$\pm$0.0009} & 0.4558{\scriptsize$\pm$0.0049} & 0.1335{\scriptsize$\pm$0.0011} & 0.7507{\scriptsize$\pm$0.0011} \\
 & $r{=2}$ & 0.6852{\scriptsize$\pm$0.0008} & 0.4480{\scriptsize$\pm$0.0033} & 0.1331{\scriptsize$\pm$0.0013} & 0.7500{\scriptsize$\pm$0.0008} \\
 & $r{=4}$ & 0.6851{\scriptsize$\pm$0.0010} & 0.4463{\scriptsize$\pm$0.0016} & 0.1334{\scriptsize$\pm$0.0013} & 0.7485{\scriptsize$\pm$0.0007} \\
 & $r{=8}$ & 0.6842{\scriptsize$\pm$0.0005} & 0.4466{\scriptsize$\pm$0.0028} & 0.1337{\scriptsize$\pm$0.0013} & 0.7471{\scriptsize$\pm$0.0004} \\
 & $r{=16}$ & 0.6835{\scriptsize$\pm$0.0004} & 0.4459{\scriptsize$\pm$0.0033} & 0.1333{\scriptsize$\pm$0.0012} & 0.7468{\scriptsize$\pm$0.0004} \\
\midrule
\multirow{5}{*}{\shortstack[l]{LoMETab \\ \scriptsize($K{=}16,\,r{=}16$)}} & $\sigma{=0.1}$ & 0.6892{\scriptsize$\pm$0.0010} & 0.4714{\scriptsize$\pm$0.0053} & 0.1350{\scriptsize$\pm$0.0010} & 0.7504{\scriptsize$\pm$0.0006} \\
 & $\sigma{=0.3}$ & 0.6892{\scriptsize$\pm$0.0008} & 0.4655{\scriptsize$\pm$0.0052} & 0.1343{\scriptsize$\pm$0.0010} & 0.7503{\scriptsize$\pm$0.0005} \\
 & $\sigma{=0.5}$ & \textbf{0.6833{\scriptsize$\pm$0.0006}} & 0.4508{\scriptsize$\pm$0.0024} & 0.1330{\scriptsize$\pm$0.0010} & 0.7493{\scriptsize$\pm$0.0010} \\
 & $\sigma{=1.0}$ & 0.6835{\scriptsize$\pm$0.0004} & 0.4459{\scriptsize$\pm$0.0033} & 0.1333{\scriptsize$\pm$0.0012} & 0.7468{\scriptsize$\pm$0.0004} \\
 & $\sigma{=2.0}$ & 0.6836{\scriptsize$\pm$0.0011} & \textbf{0.4418{\scriptsize$\pm$0.0045}} & 0.1347{\scriptsize$\pm$0.0019} & 0.7597{\scriptsize$\pm$0.0005} \\
\midrule
\multirow{4}{*}{TabM} & $K{=16}$ & 0.6882{\scriptsize$\pm$0.0015} & 0.4631{\scriptsize$\pm$0.0037} & 0.1357{\scriptsize$\pm$0.0014} & 0.7508{\scriptsize$\pm$0.0008} \\
 & $K{=32}$ & 0.6869{\scriptsize$\pm$0.0008} & 0.4425{\scriptsize$\pm$0.0032} & \textbf{0.1326{\scriptsize$\pm$0.0011}} & \textbf{0.7451{\scriptsize$\pm$0.0004}} \\
 & $K{=64}$ & 0.6878{\scriptsize$\pm$0.0010} & 0.4619{\scriptsize$\pm$0.0042} & 0.1354{\scriptsize$\pm$0.0010} & 0.7513{\scriptsize$\pm$0.0007} \\
 & $K{=128}$ & 0.6881{\scriptsize$\pm$0.0009} & 0.4622{\scriptsize$\pm$0.0042} & 0.1351{\scriptsize$\pm$0.0007} & 0.7513{\scriptsize$\pm$0.0010} \\
\bottomrule
\end{tabular}}
\end{table}
\begin{table}[htbp]
\centering\small
\caption{Classification calibration error (ECE\,$\downarrow$, 15 seeds, mean\,$\pm$\,std).}
\label{tab:appendix_clf_ece}
\resizebox{\linewidth}{!}{%
\begin{tabular}{ll|r|r|r|r|r}
\toprule
Model & Param & \textbf{HI} & \textbf{OT} & \textbf{CO} & \textbf{AD} & \textbf{CH} \\
\midrule
\multirow{5}{*}{\shortstack[l]{LoMETab \\ \scriptsize($K{=}16,\,\sigma{=}1.0$)}} & $r{=1}$ & 0.017{\scriptsize$\pm$0.010} & 0.031{\scriptsize$\pm$0.014} & 0.018{\scriptsize$\pm$0.001} & 0.012{\scriptsize$\pm$0.007} & 0.023{\scriptsize$\pm$0.007} \\
 & $r{=2}$ & 0.016{\scriptsize$\pm$0.008} & 0.025{\scriptsize$\pm$0.008} & 0.044{\scriptsize$\pm$0.002} & 0.013{\scriptsize$\pm$0.005} & 0.021{\scriptsize$\pm$0.007} \\
 & $r{=4}$ & 0.015{\scriptsize$\pm$0.007} & 0.011{\scriptsize$\pm$0.002} & 0.041{\scriptsize$\pm$0.003} & 0.012{\scriptsize$\pm$0.006} & 0.021{\scriptsize$\pm$0.007} \\
 & $r{=8}$ & 0.018{\scriptsize$\pm$0.007} & 0.011{\scriptsize$\pm$0.001} & 0.037{\scriptsize$\pm$0.002} & 0.011{\scriptsize$\pm$0.006} & 0.022{\scriptsize$\pm$0.009} \\
 & $r{=16}$ & 0.014{\scriptsize$\pm$0.007} & \textbf{0.009{\scriptsize$\pm$0.002}} & 0.030{\scriptsize$\pm$0.001} & 0.011{\scriptsize$\pm$0.005} & 0.020{\scriptsize$\pm$0.007} \\
\midrule
\multirow{5}{*}{\shortstack[l]{LoMETab \\ \scriptsize($K{=}16,\,r{=}16$)}} & $\sigma{=0.1}$ & 0.020{\scriptsize$\pm$0.010} & 0.041{\scriptsize$\pm$0.019} & 0.017{\scriptsize$\pm$0.002} & 0.010{\scriptsize$\pm$0.005} & 0.025{\scriptsize$\pm$0.008} \\
 & $\sigma{=0.3}$ & 0.017{\scriptsize$\pm$0.008} & 0.040{\scriptsize$\pm$0.014} & 0.017{\scriptsize$\pm$0.001} & \textbf{0.010{\scriptsize$\pm$0.004}} & \textbf{0.020{\scriptsize$\pm$0.006}} \\
 & $\sigma{=0.5}$ & 0.019{\scriptsize$\pm$0.008} & 0.033{\scriptsize$\pm$0.016} & 0.026{\scriptsize$\pm$0.003} & 0.012{\scriptsize$\pm$0.005} & 0.025{\scriptsize$\pm$0.009} \\
 & $\sigma{=1.0}$ & 0.014{\scriptsize$\pm$0.007} & \textbf{0.009{\scriptsize$\pm$0.002}} & 0.030{\scriptsize$\pm$0.001} & 0.011{\scriptsize$\pm$0.005} & 0.020{\scriptsize$\pm$0.007} \\
 & $\sigma{=2.0}$ & 0.013{\scriptsize$\pm$0.008} & 0.013{\scriptsize$\pm$0.004} & 0.034{\scriptsize$\pm$0.002} & 0.012{\scriptsize$\pm$0.006} & 0.020{\scriptsize$\pm$0.008} \\
\midrule
\multirow{3}{*}{TabM} & $K{=32}$ & 0.022{\scriptsize$\pm$0.009} & 0.043{\scriptsize$\pm$0.010} & \textbf{0.015{\scriptsize$\pm$0.003}} & 0.024{\scriptsize$\pm$0.011} & 0.022{\scriptsize$\pm$0.007} \\
 & $K{=64}$ & \textbf{0.013{\scriptsize$\pm$0.004}} & 0.030{\scriptsize$\pm$0.011} & 0.016{\scriptsize$\pm$0.005} & 0.011{\scriptsize$\pm$0.004} & 0.084{\scriptsize$\pm$0.065} \\
 & $K{=128}$ & 0.014{\scriptsize$\pm$0.007} & 0.020{\scriptsize$\pm$0.005} & 0.016{\scriptsize$\pm$0.004} & 0.016{\scriptsize$\pm$0.008} & 0.047{\scriptsize$\pm$0.052} \\
\bottomrule
\end{tabular}}
\end{table}

\clearpage

\section{Full Benchmark Results with Standard Deviations}
\label{app:benchmark_full_results}

The tables in this appendix provide the per-dataset benchmark scores used to construct
Fig.~\ref{fig:main_results}. We follow the reporting convention of the TabM benchmark: arrows
indicate whether higher or lower values are better, and the ``Single model'' and ``Ensemble''
columns correspond to the two evaluation settings reported in the released benchmark artifacts.
Baseline numbers are taken from the released TabM reports, while LoMETab results are computed in
our forked pipeline and averaged over 15 random seeds.

These tables are intended to make the source-dataset-level comparison transparent rather than to
claim statistical dominance. In several datasets, leading methods form a tight performance band, so
small gaps should be interpreted together with the reported standard deviations. A dash indicates
that the corresponding benchmark report does not provide an ensemble result; \texttt{nan} standard
deviations are inherited from the released reports when seed-level dispersion is unavailable.

\newcommand{\topalign}[1]{%
\vtop{\vskip 0pt #1}}





\clearpage
\newpage

\end{document}